\documentclass{article} 
\usepackage{icomp2026_conference,times}

\usepackage{amsmath,amsfonts,bm}

\def\Secref#1{Section~\ref{#1}}

\def\eqref#1{~\ref{#1}}
\def\Eqref#1{Equation~\ref{#1}}

\def\1{\bm{1}}

\def\vzero{{\bm{0}}}

\def\vg{{\bm{g}}}
\def\vh{{\bm{h}}}

\def\vu{{\bm{u}}}
\def\vv{{\bm{v}}}

\def\vx{{\bm{x}}}

\def\mC{{\bm{C}}}
\def\mD{{\bm{D}}}

\def\mF{{\bm{F}}}

\def\mI{{\bm{I}}}

\def\mK{{\bm{K}}}

\def\mM{{\bm{M}}}

\def\mP{{\bm{P}}}

\def\mR{{\bm{R}}}
\def\mS{{\bm{S}}}

\def\mU{{\bm{U}}}
\def\mV{{\bm{V}}}

\def\mX{{\bm{X}}}
\def\mY{{\bm{Y}}}
\def\mZ{{\bm{Z}}}

\def\mSigma{{\bm{\Sigma}}}

\DeclareMathAlphabet{\mathsfit}{\encodingdefault}{\sfdefault}{m}{sl}
\SetMathAlphabet{\mathsfit}{bold}{\encodingdefault}{\sfdefault}{bx}{n}

\newcommand{\E}{\mathbb{E}}
\newcommand{\Ls}{\mathcal{L}}
\newcommand{\R}{\mathbb{R}}

\newcommand{\KL}{D_{\mathrm{KL}}}

\DeclareMathOperator*{\argmin}{arg\,min}

\newcommand{\Linner}{\Ls_{\mathrm{inner}}}

\newcommand{\Louter}{\Ls_{\mathrm{outer}}}
\newcommand{\Loutertilde}{\widetilde{\Ls}_{\mathrm{outer}}}

\newcommand{\Fisher}{\mF}

\usepackage{algorithm}
\usepackage{algpseudocode}
\usepackage{caption}
\usepackage{amssymb}

\algrenewcommand\algorithmicrequire{\textbf{Input:}}
\algrenewcommand\algorithmicensure{\textbf{Output:}}

\newcommand{\HypergradientDescentWithExplicitFisher}{

\begin{algorithm}[!htbp]
\footnotesize
\caption{Hypergradient Descent with Explicit Fisher}
\label{alg:explicitfisherhd}

\begin{algorithmic}[1]
\Require expert dataset $\mathcal D_{\mathrm{expert}}$, iterations $K$, damping $\lambda$
\State Initialize $\theta,\phi$
\For{$k=1,\ldots,K$}

\State
$\displaystyle
\theta^\star(\phi)
\approx
\argmin_{\vartheta}
\Linner(\vartheta,\phi)$
\Comment{e.g., PPO, SAC, or REINFORCE}

\State $\theta \gets \theta^\star(\phi)$

\State Collect trajectories
$\mathcal D_{\mathrm{agent}} \sim p_{\pi_{\theta}}$

\State Compute $\widehat{\vg}$
\Comment{\Eqref{eq:method-outer-gradient}}

\State Compute $\widehat{\mF}$
\Comment{\Eqref{eq:method-empirical-fisher}}

\State Solve for $\widehat{\vv}$
\Comment{\Eqref{eq:method-fisher-system}}

\State Compute $\widehat{\vu}$
\Comment{\Eqref{eq:method-cross-vector}}

\State
$\widehat{\vh}\gets-\widehat{\vu}$

\State Clip $\widehat{\vh}$ and update $\phi \gets \phi - \eta_\phi\widehat{\vh}$

\EndFor
\end{algorithmic}
\end{algorithm}

}

\newcommand{\SCFD}{%

\begin{algorithm}[!htbp]
\caption{Spectral Compensation Frequent Directions Solver}
\label{alg:scfd}

\begin{algorithmic}[1]

\Require sketch size $m$, dimension $d$, regularization $\lambda>0$

\State $\mS\gets\vzero_{m\times d}$
\Comment{compressed sketch}

\State $\mR\gets\varnothing$
\Comment{pending rows}

\State $\alpha\gets\lambda$
\State $\mD\gets\lambda^{-1}\mI_m$

\Statex

\Procedure{Append}{$\vx$}
    \State
    $\displaystyle
    \mR
    \gets
    \begin{bmatrix}
    \mR\\
    \vx^\top
    \end{bmatrix}$

    \If{$|\mR|=m$}
        \State Form the matrix $\displaystyle
        \mY\gets
        \begin{bmatrix}
            \mS\\
            \mR
        \end{bmatrix}
        \in\R^{2m\times d}$

        \State Compute SVD: $\left[\mU,\mSigma,\mV\right] \gets \operatorname{SVD}(\mY)$

        \State $\delta\gets\sigma_m^2$
        \State $\alpha\gets\alpha+\delta$

        \For{$i=1,\ldots,m$}
            \State
            $\displaystyle
            \bar{\sigma}_i
            \gets
            \sqrt{\max\{\sigma_i^2-\delta,0\}}
            $
        \EndFor

        \State Update the compressed sketch
        $\displaystyle
        \mS
        \gets
        \operatorname{diag}
        (\bar{\sigma}_1,\ldots,\bar{\sigma}_m)
        \mV_{1:m}^\top
        $

        \State Update the diagonal inverse statistics
        $\displaystyle
        \mD
        \gets
        \operatorname{diag}
        \left(
            \frac{1}{\bar{\sigma}_1^2+\alpha},
            \ldots,
            \frac{1}{\bar{\sigma}_m^2+\alpha}
        \right)
        $

        \State $\mR\gets\varnothing$
    \EndIf
\EndProcedure

\Statex

\Procedure{Solve}{$\vg$}
    \If{$\mR=\varnothing$}
        \State $\mZ\gets\mS$
        \State $\mM\gets\mD$
    \Else
        \State $\mC\gets\mS\mR^\top$
        \State $\mP\gets\mD\mC$

        \State Compute the Schur complement
        $\displaystyle
        \mK
        \gets
        \mR\mR^\top
        -
        \mC^\top\mP
        +
        \alpha\mI_{|\mR|}
        $

        \State Compute $\mK^{-1}$ using a Cholesky factorization

        \State Form the inverse row-space matrix
        $\displaystyle
        \mM
        \gets
        \begin{bmatrix}
            \mD+\mP\mK^{-1}\mP^\top
            &
            -\mP\mK^{-1}
            \\[2pt]
            -\mK^{-1}\mP^\top
            &
            \mK^{-1}
        \end{bmatrix}
        $

        \State Form the current sketch matrix
        $\displaystyle
        \mZ
        \gets
        \begin{bmatrix}
            \mS\\
            \mR
        \end{bmatrix}
        $
    \EndIf

    \State Compute the inverse-vector product
    \Statex \hspace{\algorithmicindent}
    $\displaystyle
    \vv
    \gets
    \alpha^{-1}
    \left(
        \vg-\mZ^\top\mM\mZ\vg
    \right)
    $

    \State \Return $\vv$
\EndProcedure

\end{algorithmic}
\end{algorithm}
}

\newcommand{\HypergradientDescentWithFisherSketching}{%

\begin{algorithm}[!htbp]
\footnotesize
\caption{Hypergradient Descent with Fisher Sketching}
\label{alg:sketched-fishernhd}

\begin{algorithmic}[1]
\Require expert dataset $\mathcal D_{\mathrm{expert}}$,
iterations $K$, sketch size $m$, damping $\lambda$

\State Initialize $\theta,\phi$

\For{$k=1,\ldots,K$}

    \State
    $\displaystyle
    \theta^\star(\phi)
    \approx
    \argmin_{\vartheta}
    \Linner(\vartheta,\phi)$
    \Comment{e.g., PPO, SAC, or REINFORCE}

    \State $\theta\gets\theta^\star(\phi)$

    \State Collect trajectories
    $\mathcal D_{\mathrm{agent}}
    \sim p_{\pi_\theta}$

    \State Compute $\widehat{\vg}$
    \Comment{\Eqref{eq:method-outer-gradient}}

    \State Initialize a SCFD sketch
    \Comment{\Cref{alg:scfd}}

    \ForAll{$\tau\in\mathcal D_{\mathrm{agent}}$}
        \For{$t=1,\ldots,|\tau|$}
            \State
            $\displaystyle
            \vx
            \gets
            \sqrt{
                \frac{\alpha}{N_{\text{agent}}}
                \gamma^{t-1}
            }
            \nabla_\theta
            \log\pi_\theta(a_t\mid s_t)$

            \State \Call{Append}{$\vx$}
        \EndFor
    \EndFor

    \State
    $\widehat{\vv}
    \gets
    \Call{Solve}{\widehat{\vg}}$

    \State Compute $\widehat{\vu}$
    \Comment{\Eqref{eq:method-cross-vector}}

    \State
    $\widehat{\vh}\gets-\widehat{\vu}$

    \State Clip $\widehat{\vh}$ and update $\phi \gets \phi - \eta_\phi\widehat{\vh}$

\EndFor
\end{algorithmic}
\end{algorithm}
}

\icompfinalcopy

\usepackage{url}

\usepackage{amsthm}
\usepackage{mathtools}
\usepackage{booktabs}
\usepackage{multirow}

\usepackage{xcolor}
\usepackage[textwidth=3cm,textsize=footnotesize]{todonotes}
\usepackage{hyperref}
\usepackage{cleveref}

\theoremstyle{plain}

\newtheorem{proposition}{Proposition}[section]
\newtheorem{lemma}{Lemma}[section]
\newtheorem{corollary}{Corollary}[section]

\theoremstyle{definition}

\theoremstyle{remark}

\renewcommand{\Pr}{\mathbb{P}}
\newcommand{\Loss}{\mathcal{L}}
\newcommand{\mcS}{\mathcal{S}}
\newcommand{\mcA}{\mathcal{A}}
\newcommand{\mcP}{\mathcal{P}}

\newcommand{\eqsp}{\;}
\newcommand{\wt}[1]{\widetilde{#1}}
\newcommand{\rset}{\mathbb{R}}

\title{Efficient Hypergradient Descent for~Inverse~Reinforcement Learning}

\author{Nikita Sevriukov, Anna Barabanova, Uliana Gagarina, Karina Ivanova
\& Sofiia Kasaeva  \\
HSE University \\
Moscow, Russia \\
\texttt{\{adbarabanova,uigagarina,kaalivanova,sakasaeva,nvsevriukov\}@edu.hse.ru} \\
\AND
Ilya Levin \& Marina Sheshukova \\
HSE University \\
Moscow, Russia \\
\texttt{\{ivlevin,msheshukova\}@hse.ru}
}

\begin{document}

\maketitle

\begin{abstract}
Inverse reinforcement learning (IRL) aims to recover a reward function under which the resulting policy reproduces the behavior observed in expert demonstrations. A natural approach is to formulate IRL as a bilevel optimization problem, in which the inner level corresponds to policy optimization under the learned reward and the outer level measures the discrepancy between the induced policy and expert data. However, this formulation is computationally challenging in practice because the outer update requires a hypergradient involving an inverse-Hessian-vector product for the inner objective. We address this challenge by showing that, at the inner optimum, the Hessian of the inner objective is proportional to the Fisher information matrix of the policy, yielding a structured Fisher-based hypergradient closely related to Natural Hypergradient Descent. To address the resulting scalability bottleneck associated with large Fisher matrices, we approximate the required inverse-Fisher-vector product using a streaming spectral sketch, avoiding explicit construction of the Fisher matrix. We evaluate our approach against a first-order stochastic bilevel baseline across discrete- and continuous-control environments. The results demonstrate competitive policy performance and strong reward-ranking quality, while Fisher sketching reduces curvature-storage complexity and can improve computational efficiency relative to an explicit Fisher solver.
\end{abstract}

\section{Introduction}

Inverse reinforcement learning (IRL) seeks to recover a reward function from expert demonstrations. Unlike methods that only imitate observed behavior, learning an explicit reward can support policy re-optimization when the environment dynamics or downstream constraints change~\cite{zeng2022maximum}. We study maximum-likelihood IRL (ML-IRL), which admits the bilevel formulation
\begin{align*}
\min_{\phi}\ \Louter(\theta^\star(\phi))
\qquad
\text{s.t.}\quad
\theta^\star(\phi)\in\arg\min_\theta\Linner(\theta,\phi) \eqsp,
\end{align*}
where the inner problem learns an entropy-regularized policy under reward $r_\phi$, while the outer objective measures its fit to expert demonstrations.

Bilevel IRL can be approached through nested optimization, simultaneous single-loop updates, or differentiation through the inner solution. \cite{zeng2022maximum} propose a single-loop ML-IRL method that alternates one policy-improvement step with a stochastic reward update and enjoys finite-time convergence guarantees. This avoids repeatedly solving the inner problem. A complementary approach is implicit differentiation, which explicitly accounts for the sensitivity of the optimized policy to changes in the reward. Its main bottleneck is an inverse-inner-Hessian-vector product: explicit construction requires quadratic memory, while iterative approximations may require many Hessian-vector products and can be sensitive to conditioning~\cite{ghadimi2018approximation}.

In some problems, the hypergradient can be computed efficiently. For example, Natural Hypergradient Descent (NHGD)~\cite{b1} replaces the inner Hessian with a Fisher-information surrogate for likelihood-based inner problems. However, its standard setting assumes a fixed data distribution, whereas the ML-IRL inner objective is a reverse KL divergence over a policy-induced trajectory distribution, whose sampling distribution depends on the policy parameters. The Hessian--Fisher relation is therefore not immediate. We show that, under exact inner optimality and realizability,
\begin{align*}
\nabla_\theta^2\Linner(\theta^\star(\phi),\phi)
=
\alpha\Fisher_{\theta^\star(\phi)} \eqsp,
\end{align*}
where $\Fisher_{\theta^\star(\phi)}$ is the discounted trajectory Fisher information matrix of the policy. This identity yields a Fisher-based implicit hypergradient for ML-IRL. To avoid the remaining quadratic cost of a dense Fisher matrix, we approximate the required inverse-Fisher-vector product using a streaming SCFD sketch~\cite{chen2020efficient}.

Our contributions are:
\begin{itemize}
\item We derive sample-based estimators for the ML-IRL implicit hypergradient and prove that the inner Hessian is proportional to the trajectory Fisher information matrix at a realizable inner optimum. To our knowledge, this is the first use of this identity for implicit hypergradient computation in ML-IRL.

\item We propose an SCFD-based solver that streams policy-score vectors and avoids constructing either the score matrix or the dense Fisher matrix, reducing storage from \(\mathcal O(d_\theta^2)\) to \(\mathcal O(md_\theta)\) for sketch size \(m\).

\item We compare the method with single-loop ML-IRL on CartPole and LQR under matched computational budgets. The Fisher-based methods achieve competitive policy quality and strong reward-ranking performance, while moderate sketches reduce memory and can improve runtime relative to an explicit Fisher solver.
\end{itemize}

\section{Related Work}

\paragraph{Inverse reinforcement learning.}
Inverse reinforcement learning (IRL) infers a reward function from expert demonstrations. Classical approaches include maximum-margin, Bayesian, maximum-likelihood, and maximum-entropy formulations. Maximum-entropy IRL is especially relevant to our setting because it models trajectory probabilities through cumulative rewards and connects reward learning with entropy-regularized control~\cite{levine2018reinforcement,bloem_maxent}.

The closest related work is the ML-IRL framework of ~\cite{zeng2022maximum}, which formulates reward learning as a bilevel problem with expert-action likelihood at the outer level and entropy-regularized policy optimization at the inner level. For linear rewards, this formulation is dual to maximum-entropy IRL. They propose a single-loop, two-timescale algorithm with finite-time convergence guarantees. We consider a complementary strategy based on implicit differentiation, exploiting inner-level curvature to compute the hypergradient.

Related bilevel RL methods model the dependence of an outer reward-design or alignment objective on a learned policy. PARL applies implicit hypergradients to policy alignment~\cite{chakraborty2024parl}, while ~\cite{shen2025principled} replace lower-level optimality with a penalty formulation, and ~\cite{yang2024bilevel} exploit the soft Bellman fixed point to obtain fully first-order hypergradients. In iterative RLHF, ~\cite{gauthier2026explaining} show that ignoring the policy's influence on subsequent reward-model updates omits an important steering term. These works address broader reward-design and alignment settings, whereas we specialize to ML-IRL and exploit the Fisher geometry of its reverse-KL inner problem.

\paragraph{Hypergradient methods.}
Bilevel hypergradients are commonly estimated either by differentiating through a truncated inner optimization trajectory or by implicit differentiation. The latter requires an inverse-Hessian--vector product, often approximated using Neumann-series estimators or iterative solvers such as conjugate gradients~\cite{ghadimi2018approximation,b1}. Unrolling may require storing the inner optimization trajectory, while implicit approximations generally require repeated Hessian-vector products.

Natural Hypergradient Descent (NHGD)~\cite{b1} exploits likelihood structure by replacing the inner Hessian with a Fisher approximation. Its standard formulation assumes a forward-KL or negative-log-likelihood objective under parameter-independent sampling, whereas ML-IRL involves a reverse KL over policy-induced trajectories. We show that, under exact inner optimality and realizability, the inner Hessian equals $\alpha$ times the discounted trajectory Fisher matrix, adapting Fisher-based hypergradients to ML-IRL. A streaming spectral sketch avoids storing this matrix densely.

\paragraph{Scalable Fisher approximations.}
The Fisher information matrix also underlies natural policy-gradient methods, which precondition policy updates by $\Fisher_\theta^{-1}\nabla_\theta J(\theta)$~\cite{npg_in_rl}. Scalable variants include ACKTR, which uses layerwise Kronecker-factored curvature approximations~\cite{wu2017scalable}, and rank-one approximations to the inverse Fisher matrix~\cite{huo2026rank}. These methods approximate the natural policy gradient rather than a bilevel hypergradient and impose architectural or fixed-rank structure on the Fisher approximation.

Matrix sketching offers a complementary, data-dependent approximation. Frequent Directions~\cite{ghashami2015frequent} maintains a compact spectral approximation of a matrix from streamed rows, while Spectral Compensation Frequent Directions (SCFD)~\cite{chen2020efficient} compensates for spectral information discarded during compression. Related sketches have been used for covariance and inverse-design-matrix approximation in contextual bandits~\cite{b4,lin_ucb}. We apply SCFD to streamed, discounted policy-score vectors and use the resulting sketch to approximate the damped inverse-Fisher-vector product inside the ML-IRL hypergradient, without constructing either the score matrix or the dense Fisher matrix.

\section{Preliminaries}
A \emph{Markov Decision Process} (MDP) is a tuple $(\mcS, \mcA, \mathcal{P}, r)$,
 where $\mcS$ is the state space, $\mcA$ is the action space,
$\mathcal{P}$ is the transition probability kernel and
$r: \mcS \times \mcA \to \rset$ is the reward function.
For each state $s\in \mcS$ and action $a\in \mcA$, $\mcP(\cdot | s, a)$ stands
for a distribution over the states in $\mcS$,
that is, the distribution over the next states given that action $a$ is taken in state $s$.
For each action $a\in \mcA$ and state $s\in \mcS$, $r(s,a)$ gives the reward received when action $a$ is taken in state $s$.
A policy $\pi$ is a map from states to probability distributions
over actions, that is,
\[
    \pi : \mcS \to \Delta(\mcA),
\]
where $\Delta(\mcA)$ denotes the set of probability distributions over
$\mcA$, and $\pi(\cdot\mid s)$ is the distribution over actions in state $s$. An MDP describes the interaction of an agent and its environment. At time $t$, given the current state $X_t\in\mcS$, the agent chooses an action
$A_t\sim\pi(\cdot\mid X_t)$, and the next state is sampled according to $X_{t+1}\sim\mcP(\cdot\mid X_t,A_t)$.

Let $\pi_\theta$ be a differentiable policy parameterized by
$\theta\in\Theta\subseteq\rset^d$. Let $p_{\pi_\theta}$ denote the trajectory distribution induced by $\mcP$ and policy $\pi_\theta$. We define the
\emph{discounted trajectory Fisher information matrix} by
\[
    \Fisher_\theta
    :=
    \E_{\tau \sim p_{\pi_\theta}}
    \left[
        \sum_{t=1}^\infty \gamma^{t-1}\nabla_\theta \log \pi_\theta(a_t \mid s_t)
        \nabla_\theta \log \pi_\theta(a_t\mid s_t)^\top
    \right].
\]

\section{Problem statement}
We consider an infinite-horizon MDP $M = (\mcS, \mcA, \mathcal{P}, \nu, r)$ with initial state distribution \(\nu\) and transition kernel \(\mathcal{P}(s' \mid s,a)\). To incorporate discounting, we use the standard absorbing-state construction.
Introduce an absorbing state \(s_f\) and a fixed action \(a_f\), such that
\[
    \mathcal{P}(s_f\mid s_f,a_f)=1,
    \qquad
    r_\phi(s_f,a_f)=0,
    \qquad
    \pi_\theta(a_f\mid s_f)=1.
\]
For a fixed discount factor \(\gamma\in(0,1)\), define the modified transition kernel
\[
    \widetilde{\mathcal{P}}(s'\mid s,a)
    =
    \gamma \mathcal{P}(s'\mid s,a)
    +
    (1-\gamma)\mathbb{I}\{s'=s_f\}.
\]
We denote the corresponding MDP by $\wt{M} = (\mcS \cup \{s_f\}, \mcA \cup \{a_f\}, \wt{\mathcal{P}}, \nu, r)$. Under this construction, the process continues with probability \(\gamma\) at each
time step and transitions to the absorbing state with probability \(1-\gamma\).
Since both the reward and the log-policy term are zero in the absorbing state, this
construction is equivalent to weighting the contribution of time step \(t\) by
\(\gamma^{t-1}\).

Let \(\pi\) be a policy and let a trajectory be denoted by
\[
    \tau = (s_1,a_1,\ldots,s_{T(\tau)},a_{T(\tau)}, s_f, \dots),
    \qquad
    s_1 \sim \nu \eqsp,
\]
where we have set $T(\tau) = \inf\{t \geq 1: s_{t+1} = s_f\}$. Due to the structure of $\wt{\mathcal{P}}$, it can be easily verified that $\Pr[T(\tau) < \infty] = 1$. The trajectory distribution induced by \(\pi\) in the modified MDP is given by
\[
    \widetilde p_\pi(\tau)
    =
    \tilde{\rho}(\tau)
    \prod_{t=1}^{\infty}
    \pi(a_t \mid s_t)\eqsp, \quad
    \tilde{\rho}(\tau)
    =
    \nu(s_1)
    \prod_{t=1}^{\infty}
    \widetilde{\mathcal{P}}(s_{t+1}\mid s_t,a_t)\eqsp,
\]
where $\tilde{\rho}(\tau)$ denotes the environment-dependent part of the trajectory probability.

We parameterize the state-action reward function by $\phi$. The learned reward assigned to a trajectory is defined as the undiscounted sum of per-step rewards:
\[
R_{\phi}(\tau)
=
\sum_{t=1}^{\infty}
r_\phi(s_t,a_t) \eqsp.
\]
Note that since $T$ is finite almost surely, $R_\phi(\tau)$ is also finite almost surely.


Following the work \cite{levine2018reinforcement}, the learned reward induces a trajectory
distribution that assigns higher probability to trajectories with larger cumulative
reward. Since the numerical scale of the learned reward is not fixed a priori, we
introduce a fixed temperature parameter \(\alpha>0\):
\[
    \tilde{p}_{\phi}(\tau)
    =
    \frac{1}{Z_{\phi}}
    \tilde{\rho}(\tau)
    \exp\left(\frac{R_\phi(\tau)}{\alpha}\right)\eqsp, \quad
    Z_{\phi}
    =
    \int
    \tilde{\rho}(\tau)
    \exp\left(\frac{R_\phi(\tau)}{\alpha}\right)
    d\tau < \infty \eqsp.
\]
Here, \(\alpha\) is treated as a fixed hyperparameter rather than a learnable
reward parameter. Smaller values of \(\alpha\) make the induced distribution more
concentrated around high-reward trajectories, while larger values of \(\alpha\)
produce a smoother distribution. This temperature parameter controls the relative
scale of the reward term in the maximum-entropy trajectory model.

We now assume that the policy is parameterized by $\theta$, i.e. $\pi = \pi_\theta$. Let $\tilde{p}_{\pi_\theta}(\tau)$ denote the trajectory distribution induced by this policy in MDP $\wt{M}$. We also assume access to expert demonstrations generated by an expert policy $\pi_{\mathrm{expert}}$, which represents the behavior of the real object or agent. We denote the corresponding trajectory distribution by $\tilde{p}_{\mathrm{expert}}(\tau)$. The next lemma guarantees that the absorbing-state construction allows expectations under the modified process to be evaluated using trajectories from the original MDP.
\begin{lemma}[Absorbing-state representation of discounting]
\label{lem:absorbing-discounting-main}
Let \(f(s,a)\) be any measurable function such that \(f(s_f,a_f)=0\) and the expectations below are finite. Then
\[
    \E_{\tau\sim \widetilde p_{\pi_\theta}}
    \left[
        \sum_{t=1}^{\infty}
        f(s_t,a_t)
    \right]
    =
    \E_{\tau\sim p_{\pi_\theta}}
    \left[
        \sum_{t=1}^{\infty}
        \gamma^{t-1} f(s_t,a_t)
    \right].
\]
\end{lemma}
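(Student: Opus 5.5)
The plan is to couple the two processes on a common probability space and then take expectations term by term.

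\textbf{Coupling.} Run the original chain $(s_t,a_t)_{t\ge1}$ under $p_{\pi_\theta}$. Independently, draw i.i.d.\ Bernoulli$(\gamma)$ ``survival'' variables $B_1,B_2,\dots$. Define the modified trajectory as $\tilde s_t=s_t$ and $\tilde a_t=a_t$ for $t\le T:=\inf\{t\ge1: B_t=0\}$, and $(\tilde s_t,\tilde a_t)=(s_f,a_f)$ for $t>T$.

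\textbf{Step 1: the coupled process has law $\widetilde p_{\pi_\theta}$.} Conditional on the history up to $(\tilde s_t,\tilde a_t)$ with $\tilde s_t\neq s_f$, the next state equals $s_f$ with probability $1-\gamma$. With probability $\gamma$ it is drawn from $\mathcal P(\cdot\mid s_t,a_t)$, because $B_t$ is independent of the chain. This is exactly $\widetilde{\mathcal P}(\cdot\mid s_t,a_t)$, and the actions are drawn from $\pi_\theta(\cdot\mid \tilde s_t)$. Once in $s_f$, the process stays there and plays $a_f$. Matching these one-step kernels and the initial law $\nu$ identifies the joint law via the product formula for $\widetilde p_\pi$ (formally, by Ionescu--Tulcea uniqueness). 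One minor point: if $\mathcal P$ itself could reach $s_f$, the term $\gamma\mathcal P(s_f\mid s,a)$ must be merged with the $1-\gamma$ term. Since $s_f$ is a new state not in $\mcS$, this term vanishes.

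\textbf{Step 2: term-by-term expectation.} Because $f(s_f,a_f)=0$,
\[
\sum_{t\ge1} f(\tilde s_t,\tilde a_t)=\sum_{t\ge1}\mathbb{I}\{T\ge t\}\,f(s_t,a_t).
\]
The event $\{T\ge t\}=\{B_1=\dots=B_{t-1}=1\}$ is independent of $(s_t,a_t)$ and has probability $\gamma^{t-1}$. Hence, provided expectation and sum can be interchanged,
\[
\E\Big[\sum_t \mathbb{I}\{T\ge t\}f(s_t,a_t)\Big]=\sum_t\gamma^{t-1}\E[f(s_t,a_t)]=\E_{p_{\pi_\theta}}\Big[\sum_t\gamma^{t-1}f(s_t,a_t)\Big].
\]

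\textbf{Main obstacle: justifying the interchange.} For $f\ge0$ this is Tonelli. For general $f$, split $f=f^+-f^-$ and apply the argument to each part. The resulting identity for $|f|$ shows that finiteness on either side is equivalent to $\sum_t\gamma^{t-1}\E|f(s_t,a_t)|<\infty$. I will interpret the lemma's hypothesis ``the expectations below are finite'' in this absolute sense, after which Fubini applies on both sides and the two parts subtract to give the claim. If only the signed expectations are assumed finite, the interchange may fail, so the absolute-integrability reading is necessary.
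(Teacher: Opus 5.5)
Your proof is correct and is essentially the paper's argument: the paper also proves the per-step identity $\E_{\tau\sim\widetilde p_{\pi_\theta}}[f(s_t,a_t)]=\gamma^{t-1}\E_{\tau\sim p_{\pi_\theta}}[f(s_t,a_t)]$ from the survival probability $\gamma^{t-1}$ and then sums over $t$ under its standing absolute-convergence assumption, and your Bernoulli coupling makes rigorous its informal claim that, conditional on survival, $(s_t,a_t)$ keeps its original law. One correction to your closing remark: absolute integrability is needed for your term-by-term interchange but not for the identity itself, which already holds whenever the left-hand side is integrable, since conditioning on the original chain turns the left-hand side into $\E_{\tau\sim p_{\pi_\theta}}\bigl[\sum_{n\ge1}(1-\gamma)\gamma^{n-1}\sum_{t\le n}f(s_t,a_t)\bigr]$ and Abel summation identifies the inner series with $\sum_{t}\gamma^{t-1}f(s_t,a_t)$.
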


Therefore, we formulate the following bilevel optimization problem:
\begin{align}
\label{eq:bilevel_irl_def}
    &\KL
    \left(
        \tilde{p}_{\mathrm{expert}}(\tau)
        \,\|\,
        \tilde{p}_{\pi_{\theta^{*}(\phi)}}(\tau)
    \right)
    \to \min_{\phi} \eqsp,\\
    \nonumber
    &\text{s.t.}\quad
    \theta^*(\phi) \in \arg\min_{\theta}
    \KL
    \left(
        \tilde{p}_{\pi_\theta}(\tau)
        \,\|\,
        \tilde{p}_\phi(\tau)
    \right) \eqsp.
\end{align}
Now, using \Cref{lem:absorbing-discounting-main}, we provide an equivalent formulation of this problem in \Cref{prop:irl_problem_formulation}.


\begin{proposition}
\label{prop:irl_problem_formulation}
Set
\begin{align}
\label{eq:loss_irl_def}
    \Loss_{\mathrm{outer}}(\theta)
&:=
-
\mathbb{E}_{\tau\sim p_{\mathrm{expert}}}
\left[
    \sum_{t=1}^{\infty}
    \gamma^{t-1}
    \log \pi_\theta(a_t\mid s_t)
\right]\eqsp,\\
 \nonumber\Loss_{\mathrm{inner}}(\theta,\phi)
    &:=
    \mathbb{E}_{\tau\sim p_{\pi_\theta}}
    \left[
        \sum_{t=1}^{\infty}
        \gamma^{t-1}
        \left(
            \alpha \log \pi_\theta(a_t\mid s_t)
            -
            r_\phi(s_t,a_t)
        \right)
    \right]\eqsp.
\end{align}
The optimization problem defined in \eqref{eq:bilevel_irl_def} is equivalent to
\begin{equation}
\label{eq:loss_no_kl_def}
    \Loss_{\mathrm{outer}}(\theta^*(\phi)) \to \min_{\phi}
    \quad \text{s.t.} \quad
    \theta^*(\phi) \in \arg\min_{\theta}
    \Loss_{\mathrm{inner}}(\theta,\phi) \eqsp.
\end{equation}
\end{proposition}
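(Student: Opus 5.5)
We prove the equivalence by expanding both KL divergences in \eqref{eq:bilevel_irl_def}. The environment factor $\tilde\rho(\tau)$ cancels in each log-ratio, what remains are sums of per-step terms, and \Cref{lem:absorbing-discounting-main} turns sums over the modified process into discounted sums over the original MDP. We then check that each objective differs from its counterpart in \eqref{eq:loss_no_kl_def} only by a $\theta$-independent (respectively $\phi$-independent) additive constant and a positive multiplicative factor. This suffices, since such changes preserve the $\arg\min$ sets.

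\textbf{Inner problem.} We write
\[
\log\frac{\tilde p_{\pi_\theta}(\tau)}{\tilde p_\phi(\tau)}
= \sum_{t=1}^\infty \log\pi_\theta(a_t\mid s_t) - \frac{1}{\alpha}\sum_{t=1}^\infty r_\phi(s_t,a_t) + \log Z_\phi .
\]
This uses $\tilde\rho$ cancelling and $R_\phi(\tau)=\sum_t r_\phi(s_t,a_t)$. Taking the expectation under $\tilde p_{\pi_\theta}$ and applying \Cref{lem:absorbing-discounting-main} with $f(s,a)=\log\pi_\theta(a\mid s)-\alpha^{-1}r_\phi(s,a)$ is legitimate because $f(s_f,a_f)=\log 1-0=0$ by the absorbing-state conventions. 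The result is $\alpha^{-1}\Linner(\theta,\phi)+\log Z_\phi$. Since $\alpha>0$ and $Z_\phi$ does not depend on $\theta$, the minimizers in $\theta$ coincide, so $\theta^*(\phi)$ is the same set in both formulations.

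\textbf{Outer problem.} Analogously,
\[
\KL(\tilde p_{\mathrm{expert}}\,\|\,\tilde p_{\pi_\theta}) = \E_{\tilde p_{\mathrm{expert}}}\Big[\sum_t \log\pi_{\mathrm{expert}}(a_t\mid s_t)\Big] - \E_{\tilde p_{\mathrm{expert}}}\Big[\sum_t \log\pi_\theta(a_t\mid s_t)\Big],
\]
again because $\tilde\rho$ cancels. The lemma is stated for $\pi_\theta$, but its proof only uses the Markov structure of $\widetilde{\mathcal P}$, so it applies verbatim to any policy with $\pi(a_f\mid s_f)=1$, in particular to $\pi_{\mathrm{expert}}$. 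We take the expert to satisfy this convention, as the construction of $\wt M$ implicitly does. The first term is then the negative discounted expert entropy, a constant independent of $\theta$ and hence of $\phi$. The second term equals $\Louter(\theta)$. Evaluating at $\theta=\theta^*(\phi)$ shows that the outer objectives differ by a constant, so the two problems share the same minimizers in $\phi$.

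\textbf{Main obstacle.} The algebra is routine; the delicate part is rigor. First, $\tilde p_{\pi_\theta}$ and $\tilde p_\phi$ must be densities with respect to a common reference measure on infinite trajectories, so that $\tilde\rho$ genuinely cancels in the ratio. We handle this by working with trajectories truncated at $T(\tau)<\infty$ almost surely, on which all products are finite. Second, the expectations must be finite so that splitting the logarithm is valid and the expert-entropy constant is finite. These are exactly the finiteness assumptions of \Cref{lem:absorbing-discounting-main}, which we adopt as standing assumptions.
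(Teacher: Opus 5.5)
Your proposal is correct and follows the paper's proof essentially step for step. You cancel $\tilde\rho$ in both log-ratios, treat $\log Z_\phi$ and the expert term as $\theta$-independent constants, and convert the remaining per-step sums to discounted ones via \Cref{lem:absorbing-discounting-main}. Your explicit remark that the lemma must also be applied under the expert distribution, with $\pi_{\mathrm{expert}}(a_f\mid s_f)=1$, makes precise a step the paper's ``second application'' of the lemma leaves implicit.
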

The proof of \Cref{prop:irl_problem_formulation} is provided in \Cref{app:discounted-formulation}. This type of bilevel IRL problem formulation is not novel and has appeared in, e.g., \cite{zeng2022maximum}. Here, the inner objective is equivalent to entropy-regularized RL, and the outer objective evaluates a policy on expert trajectories. The KL formulation of this problem is primarily interesting in the context of hypergradient descent because it helps us derive explicit formulas for the hypergradient, as described further in this section.

We consider a gradient-based approach to solve \eqref{eq:loss_no_kl_def}. Computing the gradient $\nabla_\phi \Loss_{\mathrm{outer}}(\theta^*(\phi))$ is not straightforward, since \(\Loss_{\mathrm{outer}}\) depends on \(\phi\) implicitly through the inner solution \(\theta^*(\phi)\). Changing \(\phi\) changes the reward-induced distribution \(\tilde p_\phi\), which changes the solution of the inner optimization problem and, in turn, the value of the outer objective. However, it is possible to derive
\begin{equation}
\label{eq:implicit-hypergradient}
    \nabla_\phi \Loss_{\mathrm{outer}}(\theta^\star(\phi))
    =
    -
    \left[\nabla_{\theta,\phi}\Linner(\theta^\star(\phi),\phi)\right]^\top
    \left[
        \nabla_{\theta}^2\Linner(\theta^\star(\phi),\phi)
    \right]^{-1}
    \nabla_{\theta}\Louter(\theta^\star(\phi))
    \eqsp,
\end{equation}
which, for completeness, we prove in \Cref{app:hypergradient-derivations}. In general, this gradient is impractical to compute since it requires estimating the inverse Hessian of the inner loss. However, in some problems it can be computed efficiently by exploiting the structure of the loss, for instance, when the Hessian can be represented by a Fisher information matrix \cite{b1}. In our setting, however, the inner problem involves reverse-KL minimization. Nevertheless, it can be shown that at the optimum $\theta^*(\phi)$, the Hessian of the inner loss is proportional to the Fisher information matrix. We establish this result in \Cref{prop:inner-hessian-fisher-main}.
\begin{proposition}
\label{prop:inner-hessian-fisher-main}
Assume that the inner problem is solved exactly and that the policy class is rich enough so that
\[
    \KL
    \left(
        \widetilde p_{\pi_{\theta^*(\phi)}}(\tau)
        \,\|\,
        \widetilde p_\phi(\tau)
    \right)
    =
    0 \eqsp.
\]
Then the Hessian of the inner objective at the inner optimum is proportional to the Fisher information matrix of the policy-induced trajectory distribution:
\begin{align*}
    \nabla_\theta^2 \Linner(\theta^\star(\phi),\phi) = \alpha \Fisher_{\theta^\star(\phi)} \eqsp.
\end{align*}
\end{proposition}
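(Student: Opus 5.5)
The plan is to write the inner objective as a scaled reverse KL divergence and then use the standard second-order expansion of a KL divergence around a point where it vanishes. From the proof of \Cref{prop:irl_problem_formulation} (via \Cref{lem:absorbing-discounting-main}), we have
\[
\Linner(\theta,\phi) = \alpha\,\KL\left(\widetilde p_{\pi_\theta}\,\|\,\widetilde p_\phi\right) - \alpha\log Z_\phi .
\]
The reason is that $\log \widetilde p_{\pi_\theta}(\tau) - \log\widetilde p_\phi(\tau) = \sum_t \log\pi_\theta(a_t\mid s_t) - R_\phi(\tau)/\alpha + \log Z_\phi$, because the factors $\tilde\rho$ cancel. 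The term $\log Z_\phi$ is independent of $\theta$, so $\nabla^2_\theta\Linner = \alpha\,\nabla_\theta^2 \KL(\widetilde p_{\pi_\theta}\|\widetilde p_\phi)$. It therefore suffices to show that $\nabla^2_\theta \KL(\widetilde p_{\pi_\theta}\|\widetilde p_\phi)$ at $\theta^\star$ equals the trajectory Fisher matrix of $\widetilde p_{\pi_\theta}$. We then identify that Fisher matrix with $\Fisher_\theta$.

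For the Hessian, write $\ell_\theta(\tau)=\log\widetilde p_{\pi_\theta}(\tau)$ and $q=\widetilde p_\phi$. The divergence is $K(\theta)=\int p_\theta(\ell_\theta-\log q)$, which gives
\[
\nabla K=\int p_\theta\nabla\ell_\theta(\ell_\theta-\log q)+\int p_\theta\nabla\ell_\theta .
\]
The second integral is $\nabla\int p_\theta=0$. Differentiating once more gives
\[
\nabla^2K=\int p_\theta\left[\nabla\ell_\theta\nabla\ell_\theta^\top(\ell_\theta-\log q) + \nabla^2\ell_\theta(\ell_\theta-\log q)+\nabla\ell_\theta\nabla\ell_\theta^\top\right].
\]
At $\theta^\star$ the KL is zero by assumption, so $p_{\theta^\star}=q$ almost everywhere and $\ell_{\theta^\star}-\log q=0$ $p_{\theta^\star}$-a.s. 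The first two terms then vanish, leaving $\E_{p_{\theta^\star}}[\nabla\ell\nabla\ell^\top]$. Since $\nabla_\theta\ell_\theta(\tau)=\sum_t\nabla_\theta\log\pi_\theta(a_t\mid s_t)$, this is the covariance of the summed score. The step needs regularity to exchange differentiation and integration, which I would state as a standing assumption: dominated convergence, with $T$ finite a.s. and its tail decaying geometrically.

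Finally, I would reduce $\E_{\widetilde p}\big[(\sum_t g_t)(\sum_s g_s)^\top\big]$, where $g_t=\nabla_\theta\log\pi_\theta(a_t\mid s_t)$, to the diagonal sum $\E_{\widetilde p}[\sum_t g_tg_t^\top]$. For $s>t$, condition on the history up to $s_s$. Then $\E[g_s\mid \mathcal F]=\sum_a\nabla\pi_\theta(a\mid s_s)=0$, so the cross terms vanish. Here we use that $g_t=0$ in the absorbing state, since $\pi(a_f\mid s_f)=1$ does not depend on $\theta$. Applying \Cref{lem:absorbing-discounting-main} with $f(s,a)=\nabla\log\pi_\theta(a\mid s)\nabla\log\pi_\theta(a\mid s)^\top$ converts the result to $\E_{p_{\pi_\theta}}[\sum_t\gamma^{t-1}g_tg_t^\top]=\Fisher_{\theta}$. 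This yields $\nabla^2_\theta\Linner(\theta^\star,\phi)=\alpha\Fisher_{\theta^\star}$.

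The main obstacle is the infinite-sum and interchange issues, not the algebra. Three things need justification: the cross-term cancellation for an infinite random-length sum, differentiating under the integral twice, and the almost-sure vanishing of the log-ratio. I would handle the first by a truncation $t\le n$ followed by dominated convergence, using $\Pr[T>n]=\gamma^n$ and an assumed square-integrability of the scores. The other two I would state as regularity assumptions.
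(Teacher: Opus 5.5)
Your proposal is correct and follows essentially the same route as the paper's proof. Both arguments take the score-function expansion of the Hessian of the $\widetilde p_{\pi_\theta}$-expectation, eliminate the non-Fisher terms via realizability, cancel the cross terms with the conditional score identity, and convert to the discounted form with \Cref{lem:absorbing-discounting-main}. The only minor variation is that you keep $\log q$ (hence $\log Z_\phi$) inside the multiplier, so it vanishes almost surely at $\theta^\star$, whereas the paper keeps the merely constant multiplier $\ell_{\theta,\phi}+\alpha$ and therefore also needs the second-order score identity $\E[S_\theta S_\theta^\top + H_\theta]=0$.
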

The proof of \Cref{prop:inner-hessian-fisher-main} is provided in \Cref{app:discounted-formulation}. Thus, when the inner solution is sufficiently close to the optimum, this result motivates approximating the Hessian using the Fisher information matrix. Given a batch of trajectories $\mathcal{D}_{\text{agent}}$ from $p_{\pi_\theta}$ of size $N_{\text{agent}}$, we can estimate the Fisher information matrix as
\begin{align}
\label{eq:method-empirical-fisher}
    \widehat{\mF}_\theta
    =
    \frac{1}{N_{\text{agent}}}
    \sum_{\tau\in\mathcal D_{\mathrm{agent}}}
    \sum_{t=1}^{|\tau|}
    \gamma^{t-1}
    \nabla_\theta\log\pi_\theta(a_t\mid s_t)
    \nabla_\theta\log\pi_\theta(a_t\mid s_t)^\top .
\end{align}
This representation allows us to estimate the inverse matrix efficiently using rank-one updates, as described in the next section.

\section{Proposed Method}

In this section, we describe how the implicit hypergradient in
\Eqref{eq:implicit-hypergradient} can be evaluated efficiently from sampled
trajectories. Its computation requires three quantities: the outer gradient,
the inverse-inner-Hessian-vector product, and the mixed-derivative-vector
product. We derive practical estimators for each of them and then introduce a
sketching-based approximation that avoids explicit construction of the Fisher
matrix.

\subsection{Fisher-Based Hypergradient Descent}
\label{subsec:explicit-fisher}

We first consider the outer-gradient term in
\Eqref{eq:implicit-hypergradient}. Differentiating the discounted outer
objective in \Eqref{eq:loss_irl_def} with respect to the policy parameters gives
\[
    \nabla_\theta\Louter(\theta)
    =
    -
    \E_{\tau\sim p_{\mathrm{expert}}}
    \left[
        \sum_{t=1}^{\infty}
        \gamma^{t-1}
        \nabla_\theta
        \log\pi_\theta(a_t\mid s_t)
    \right].
\]
Therefore, given expert demonstrations $\mathcal{D}_{\text{expert}}$ from $p_{\text{expert}}$ of size $N_{\text{expert}}$, we use the Monte Carlo estimate
\begin{equation}
\label{eq:method-outer-gradient}
    \widehat{\vg}_\theta
    =
    -
    \frac{1}{N_{\text{expert}}}
    \sum_{\tau\in\mathcal D_{\mathrm{expert}}}
    \sum_{t=1}^{|\tau|}
    \gamma^{t-1}
    \nabla_\theta
    \log\pi_\theta(a_t\mid s_t).
\end{equation}
The corresponding discounted identity is proved in
\Cref{prop:discounted-outer-gradient}.

The second term in \Eqref{eq:implicit-hypergradient} involves the inverse
Hessian of the inner objective. By
\Cref{prop:inner-hessian-fisher-main}, at the exact inner optimum this Hessian
is proportional to the Fisher information matrix. Furthermore, its discounted
trajectory form admits the stepwise representation
\[
    \nabla_\theta^2\Linner(\theta^\star(\phi),\phi)
    =
    \alpha
    \E_{\tau\sim p_{\pi_{\theta^\star(\phi)}}}
    \left[
        \sum_{t=1}^{\infty}
        \gamma^{t-1}
        \nabla_\theta\log\pi_\theta(a_t\mid s_t)
        \nabla_\theta\log\pi_\theta(a_t\mid s_t)^\top
    \right].
\]
Its derivation is provided in
\Cref{prop:discounted-stepwise-trajectory-fisher}. 

The hypergradient does not require the inverse Fisher matrix itself, but only
its product with the outer gradient. We therefore introduce
$\widehat{\vv}$ as the solution of the damped Fisher system
\begin{equation}
\label{eq:method-fisher-system}
    \left(
        \alpha\widehat{\mF}_\theta+\lambda\mI
    \right)
    \widehat{\vv}_\theta
    =
    \widehat{\vg}_\theta,
\end{equation}
where $\lambda>0$ is a regularization parameter.

It remains to evaluate the mixed-derivative term in
\Eqref{eq:implicit-hypergradient}. Direct construction of the mixed derivative
matrix is unnecessary. For each trajectory and time step, define the prefix
score-vector product
\[
    b_{\theta,t}(\tau)
    =
    \sum_{k=1}^{t}
    \nabla_\theta
    \log\pi_\theta(a_k\mid s_k)^\top
    \widehat{\vv}_\theta.
\]
The mixed-derivative-vector product is then estimated as
\begin{equation}
\label{eq:method-cross-vector}
    \widehat{\vu}
    =
    -
    \frac{1}{N_{\text{agent}}}
    \sum_{\tau\in\mathcal D_{\mathrm{agent}}}
    \nabla_\phi
    \left[
        \sum_{t=1}^{|\tau|}
        \gamma^{t-1}
        r_\phi(s_t,a_t)b_t(\tau)
    \right].
\end{equation}
The derivation is given in
\Cref{prop:discounted-prefix-score-inner-cross-derivative} and
\Cref{cor:jvp-inner-cross-derivative}. Computationally, all products
$\nabla_\theta\log\pi_\theta(a_t\mid s_t)^\top\widehat{\vv}$ are obtained
without explicitly forming the policy-score Jacobian, and the prefix values
$b_t(\tau)$ are then computed by a single cumulative-sum pass.

Combining these components gives $\widehat{\vh}=-\widehat{\vu}$ and the update
$\phi\gets\phi-\eta_\phi\widehat{\vh}$. The complete procedure is summarized
in \Cref{alg:explicitfisherhd}.

\HypergradientDescentWithExplicitFisher

\subsection{Hypergradient Descent with Fisher Sketching}
\label{subsec:fisher-with-sketching}

The explicit formulation requires storing
$\widehat{\mF}\in\R^{d_\theta\times d_\theta}$, resulting in
$\mathcal O(d_\theta^2)$ memory complexity. To avoid constructing $\widehat{\mF}_\theta$ explicitly, we instead represent the scaled Fisher matrix as $\alpha\widehat{\mF}_\theta=\mX^\top\mX$, where the rows of $\mX$ are constructed
from the weighted policy score vectors. For every transition from an agent trajectory, define
\begin{equation}
\label{eq:method-fisher-row}
    \vx_t(\tau)
    =
    \sqrt{
        \frac{\alpha}{N_{\text{agent}}}
        \gamma^{t-1}
    }
    \nabla_\theta
    \log\pi_\theta(a_t\mid s_t).
\end{equation}
Stacking these vectors as rows of a matrix $\mX$ gives
$\alpha\widehat{\mF}_\theta=\mX^\top\mX$, so \Eqref{eq:method-fisher-system} becomes
\begin{equation}
\label{eq:method-fisher-sketch-system}
    \left(
        \mX^\top\mX+\lambda\mI
    \right)
    \widehat{\vv}
    =
    \widehat{\vg}.
\end{equation}
This representation allows the rows of $\mX$ to be generated sequentially and immediately passed to a matrix sketch, without materializing either $\mX$ or $\widehat{\mF}$. We employ Spectral Compensation Frequent Directions (SCFD)~\cite{chen2020efficient} to maintain a compact approximation and approximately solve \Eqref{eq:method-fisher-sketch-system}. In our setting, SCFD is applied to the sequentially generated weighted policy-score vectors defined in \Eqref{eq:method-fisher-row}. For completeness, the SCFD update and solve procedures are provided in \Cref{alg:scfd} in the appendix. After all agent trajectories have been processed, the solver is queried once to obtain $\widehat{\vv}$, while the remaining hypergradient computation is unchanged. The resulting sketched hypergradient descent method is summarized in \Cref{alg:sketched-fishernhd}.

\HypergradientDescentWithFisherSketching

\section{Numerical Results}
\label{sec:numerical-results}

We evaluate the proposed approach on two control environments:
CartPole~\cite{barto1983neuronlike} and LQR, a continuous-control environment based on the classical linear-quadratic regulator problem~\cite{anderson2007optimal}. CartPole is a low-dimensional benchmark with discrete actions; LQR is a continuous-control setting with a larger policy parameterization. Unless stated otherwise, all methods within each environment share the same policy and reward architectures, expert data, optimization settings, and computational budget.

We compare against ML-IRL~\cite{zeng2022maximum} as the bilevel IRL baseline and consider two variants of our Fisher-based hypergradient: \emph{Explicit Fisher}, which explicitly constructs the empirical Fisher matrix and solves the damped linear system of \Secref{subsec:explicit-fisher}, and \emph{Fisher with Sketching}, which replaces this with the sketched approximation of \Secref{subsec:fisher-with-sketching}. We study both optimization behavior and computational efficiency (wall-clock time and peak memory).

\subsection{Stability and Efficiency of the Fisher Solver}

\textbf{Effect of Fisher Damping.}
We first study the effect of the damping parameter $\lambda$ in
\Eqref{eq:method-fisher-system}, using the explicit Fisher formulation to
isolate the effect of regularization from sketching error. We perform a
grid search over $\lambda$ and monitor the outer objective.

\begin{figure}[!htbp]
    \centering
    \begin{minipage}[c]{0.64\linewidth}
        \centering
        \includegraphics[width=0.9\linewidth]
        {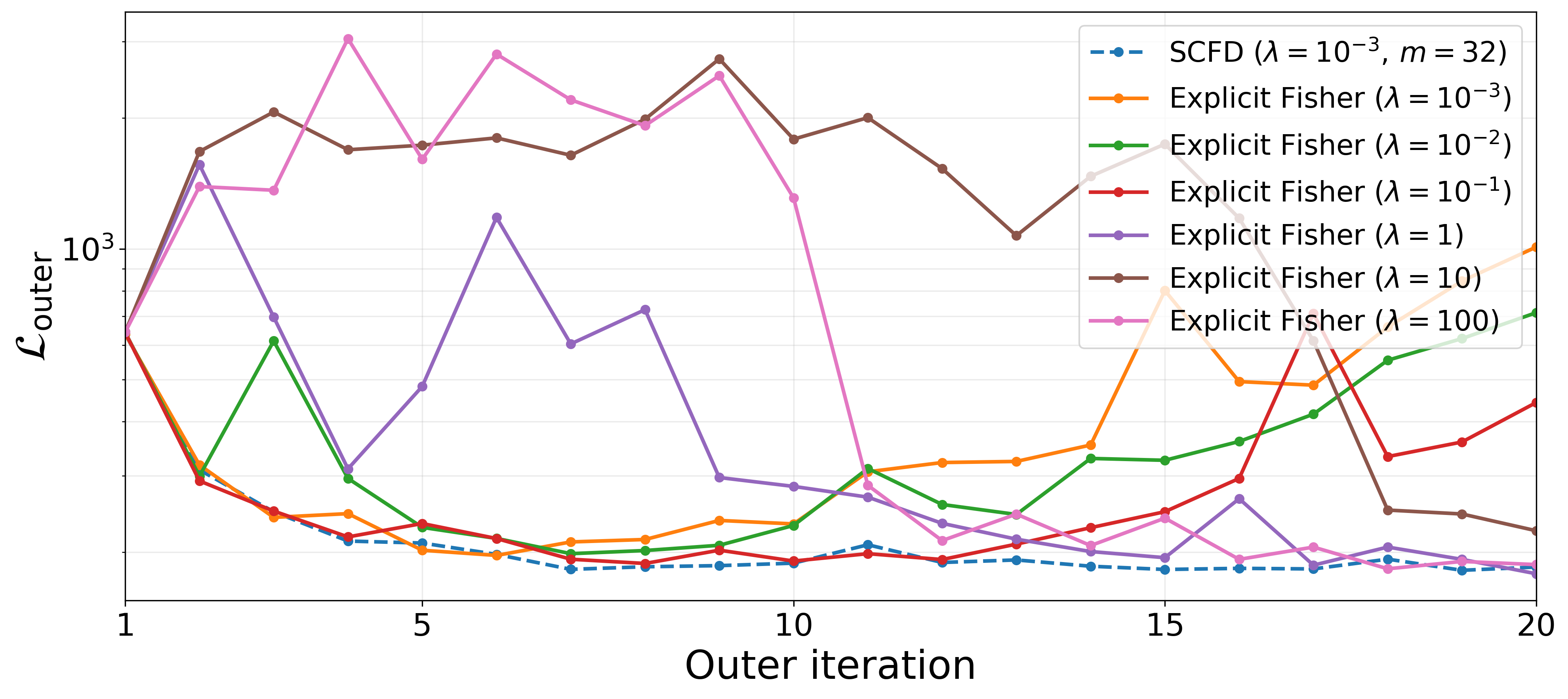}
    \end{minipage}
    \hfill
    \begin{minipage}[c]{0.35\linewidth}
        \centering
        \scriptsize
        \begin{tabular}{c c}
            \toprule
            $\lambda$ & $\Louter$ \\
            \midrule
            $10^{-3}$ & $532.75 \pm 145.02$ \\
            $10^{-2}$ & $205.84 \pm 34.91$ \\
            $10^{-1}$ & $427.76 \pm 166.27$ \\
            $10^{0}$  & $\mathbf{185.27 \pm 4.38}$ \\
            $10^{1}$  & $501.83 \pm 410.30$ \\
            $10^{2}$  & $698.50 \pm 227.01$ \\
            \bottomrule
        \end{tabular}
    \end{minipage}
    \caption{
    Effect of the Fisher damping parameter $\lambda$ on optimization.
    Left: validation outer loss throughout training.
    Right: $\Louter$ averaged over the final five outer iterations.
    }
    \label{fig:fisher-reg-grid}
\end{figure}

The results reveal a stability–fidelity trade-off. Weak damping leaves the
Fisher system poorly conditioned, destabilizing the outer optimization;
strong damping stabilizes the solve but drives
$(\alpha\widehat{\mF}_\theta+\lambda\mI)^{-1} \approx (\lambda\mI)^{-1}$,
so the Fisher information contributes little to the update while remaining
expensive to compute. The explicit solver must therefore trade off
numerical stability against preserving the Fisher geometry.

\textbf{Effect of the SCFD Sketch Size.}
We fix $\lambda=10^{-3}$, since the sketched solver remains stable under
substantially weaker damping than the explicit formulation, and vary the
sketch size $m$.

\begin{figure}[!htbp]
    \centering
    \begin{minipage}[c]{0.64\linewidth}
        \centering
        \includegraphics[width=0.9\linewidth]{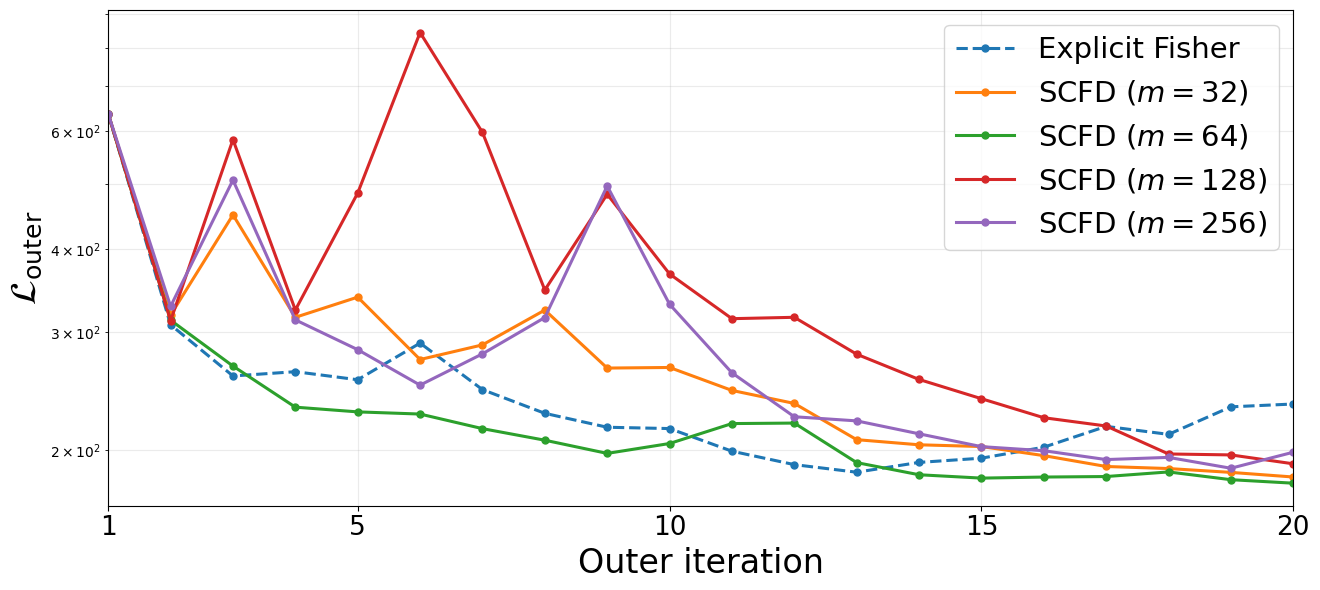}
    \end{minipage}
    \hfill
    \begin{minipage}[c]{0.35\linewidth}
        \centering
        \scriptsize
        \setlength{\tabcolsep}{3pt}
        \begin{tabular}{c c c}
            \toprule
            Method & $\Louter$ & Speedup \\
            \midrule
            Explicit & $219.15 \pm 13.82$ & $1.00\times$ \\
            $m=32$   & $187.90 \pm 5.22$  & $1.15\times$ \\
            $m=64$   & $\mathbf{181.70 \pm 2.60}$ & $\mathbf{1.17\times}$ \\
            $m=128$  & $204.96 \pm 14.38$ & $1.13\times$ \\
            $m=256$  & $194.73 \pm 4.64$  & $0.78\times$ \\
            \bottomrule
        \end{tabular}
    \end{minipage}
    \caption{
        Effect of the SCFD sketch size on optimization for $\lambda=10^{-3}$.
        Left: validation outer loss for Explicit Fisher and SCFD at
        different sketch sizes. Right: $\Louter$ averaged over the final
        five outer iterations, with speedup relative to Explicit Fisher.
    }
    \label{fig:fisher-sketch-grid}
\end{figure}

A small sketch already suffices to match or exceed the optimization quality
of Explicit Fisher: all sketch sizes achieve a lower final outer loss, with
$m=64$ performing best. Larger sketches do not improve quality
monotonically and increase cost; $m=256$ is even slower than Explicit
Fisher despite a lower loss. Moderate sketch sizes therefore offer the best
trade-off, with $m=64$ attaining the lowest outer objective while remaining
faster than the explicit solver.

\subsection{Computational Efficiency}
\label{subsec:computational-efficiency}

As shown in Table~\ref{tab:computational-efficiency}, sketching provides
the largest benefit in the higher-dimensional LQR setting, reducing
peak memory by up to $1.31\times$; small sketches also preserve or improve
runtime, while larger sketches erode this advantage. On CartPole, memory
usage is essentially unchanged, but sketching yields a $1.29\times$
speedup. The benefit of sketching thus grows with the cost of the Fisher
representation.

\begin{table}[!htbp]
    \centering
    \caption{
        Computational efficiency of Fisher with Sketching relative to
        Explicit Fisher. Memory gain is the ratio of peak memory usage,
        time speedup the ratio of wall-clock time per outer iteration.
        Values above $1\times$ indicate an improvement over Explicit
        Fisher.
    }
    \label{tab:computational-efficiency}
    \scriptsize
    \setlength{\tabcolsep}{10.0pt}
    \renewcommand{\arraystretch}{1.0}
    \begin{tabular}{l c c c}
        \toprule
        Environment & Method & Memory gain & Time speedup \\
        \midrule
        CartPole
            & Explicit Fisher & $1.00\times$ & $1.00\times$ \\
        & Sketching ($m=8$) & $0.97\times$ & $\mathbf{1.29\times}$ \\
        \midrule
        LQR
            & Explicit Fisher & $1.00\times$ & $1.00\times$ \\
        & Sketching ($m=32$)  & $1.28\times$ & $\mathbf{1.04\times}$ \\
        & Sketching ($m=64$)  & $\mathbf{1.31\times}$ & $0.89\times$ \\
        & Sketching ($m=128$) & $1.21\times$ & $0.99\times$ \\
        & Sketching ($m=256$) & $1.10\times$ & $0.79\times$ \\
        \bottomrule
    \end{tabular}
\end{table}

\vspace{-0.3cm}

\subsection{Reward Comparison}
\label{sec:overall-irl-performance}

We evaluate the recovered rewards under a matched 24-hour training budget,
using identical architectures within each environment and separately tuned
hyperparameters; a fresh policy is then trained from scratch on each
recovered reward under a fixed RL budget. We report \textbf{PolicyNLL}
($\downarrow$, negative log-likelihood of expert actions under the induced
policy), \textbf{EnvReturn} ($\uparrow$, return of this policy under the
ground-truth reward), and \textbf{RankCorr} ($\uparrow$, rank correlation
between learned and ground-truth trajectory returns). \textbf{ExpertRet}
and \textbf{RandomRet} give the ground-truth returns of the expert and
random policies as reference.

\begin{table}[!htbp]
    \centering
    \caption{
        Final IRL performance under a matched computational budget. Arrows indicate the preferred direction.
    }
    \label{tab:final-irl-performance}
    \scriptsize
    \setlength{\tabcolsep}{5.0pt}
    \renewcommand{\arraystretch}{1.0}
    \begin{tabular}{l c c c c c}
        \toprule
        Method
        & PolicyNLL $\downarrow$
        & EnvReturn $\uparrow$
        & RankCorr $\uparrow$
        & ExpertRet
        & RandomRet \\
        \midrule
        \multicolumn{6}{c}{\textbf{CartPole}} \\
        \midrule
        ML-IRL
        & $331.20 \pm 0.23$
        & $\mathbf{500.00 \pm 0.00}$
        & $0.892 \pm 0.009$
        & \multirow{3}{*}{$500.00 \pm 0.00$}
        & \multirow{3}{*}{$22.47 \pm 0.99$} \\
        Explicit Fisher
        & $281.53 \pm 0.21$
        & $\mathbf{500.00 \pm 0.00}$
        & $\mathbf{0.901 \pm 0.007}$
        & & \\
        Fisher with Sketching
        & $\mathbf{183.35 \pm 1.66}$
        & $\mathbf{500.00 \pm 0.00}$
        & $0.870 \pm 0.011$
        & & \\
        \midrule\midrule
        \multicolumn{6}{c}{\textbf{LQR}} \\
        \midrule
        ML-IRL
        & $\mathbf{248.25 \pm 2.33}$
        & $\mathbf{-25.38 \pm 1.32}$
        & $0.908 \pm 0.012$
        & \multirow{3}{*}{$-23.23 \pm 1.16$}
        & \multirow{3}{*}{$-10828.62 \pm 721.99$} \\
        Explicit Fisher
        & $284.57 \pm 3.24$
        & $-36.82 \pm 1.82$
        & $0.953 \pm 0.007$
        & & \\
        Fisher with Sketching
        & $252.37 \pm 1.33$
        & $-28.58 \pm 1.29$
        & $\mathbf{0.972 \pm 0.005}$
        & & \\
        \bottomrule
    \end{tabular}
\end{table}

On LQR, ML-IRL achieves the best PolicyNLL and EnvReturn, though its
margin over Fisher with Sketching is small, while the latter attains the
highest RankCorr. On CartPole, all methods reach expert-level EnvReturn;
Fisher with Sketching substantially improves PolicyNLL, while Explicit
Fisher achieves the best RankCorr. Overall, the Fisher-based methods remain
competitive in policy quality while providing strong reward-ranking
performance.

Figure~\ref{fig:reward} shows that Fisher with Sketching reaches competitive validation performance early in training. Despite using far fewer outer updates, it closely matches ML-IRL in PolicyNLL, EnvReturn, and RankCorr within the same wall-clock window, whereas Explicit Fisher converges substantially more slowly.

\begin{figure}[!htbp]
    \centering
    \includegraphics[width=1.00\linewidth]{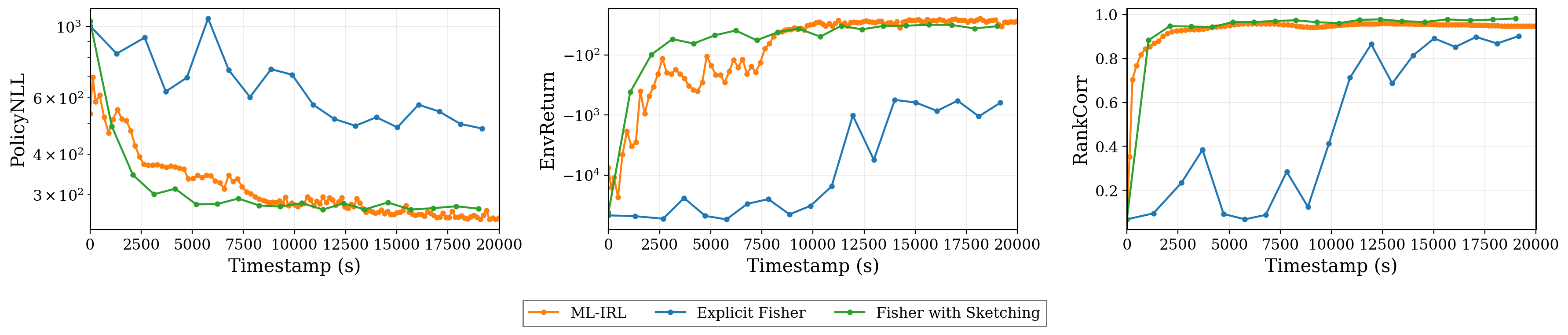}
    \caption{PolicyNLL, EnvReturn, and RankCorr over wall-clock time on the LQR validation set, truncated to the first 20{,}000 seconds. Fisher with Sketching matches ML-IRL's performance using an order of magnitude fewer iterations.}
    \label{fig:reward}
\end{figure}

\vspace{-0.3cm}

\section{Conclusion}
\label{sec:conclusion}

We developed a Fisher-based implicit hypergradient method for bilevel IRL and established that, under exact inner optimality and realizability, the inner Hessian equals $\alpha$ times the discounted trajectory Fisher matrix. An SCFD-based streaming approximation reduces curvature storage from $\mathcal O(d_\theta^2)$ to $\mathcal O(md_\theta)$. Experiments on two control tasks indicate that moderate sketches improve the stability--efficiency trade-off relative to the dense Fisher solver while remaining competitive with ML-IRL under matched computational budgets.

\bibliography{icomp2026_conference}

@article{b1,
  title         = {Natural Hypergradient Descent: Algorithm Design, Convergence Analysis, and Parallel Implementation},
  author        = {Kong, Deyi and Chen, Zaiwei and Zhang, Shuzhong and Mou, Shancong},
  journal       = {arXiv preprint arXiv:2602.10905},
  year          = {2026},
  eprint        = {2602.10905},
  archivePrefix = {arXiv}
}

@article{wu2017scalable,
  title={Scalable trust-region method for deep reinforcement learning using kronecker-factored approximation},
  author={Wu, Yuhuai and Mansimov, Elman and Grosse, Roger B and Liao, Shun and Ba, Jimmy},
  journal={Advances in neural information processing systems},
  volume={30},
  year={2017}
}

@article{yang2024bilevel,
  title={Bilevel reinforcement learning via the development of hyper-gradient without lower-level convexity},
  author={Yang, Yan and Gao, Bin and Yuan, Ya-xiang},
  journal={arXiv preprint arXiv:2405.19697},
  year={2024}
}

@article{shen2025principled,
  title={Principled penalty-based methods for bilevel reinforcement learning and rlhf},
  author={Shen, Han and Yang, Zhuoran and Chen, Tianyi},
  journal={Journal of Machine Learning Research},
  volume={26},
  number={114},
  pages={1--49},
  year={2025}
}

@INPROCEEDINGS{bloem_maxent,
  author={Bloem, Michael and Bambos, Nicholas},
  booktitle={53rd IEEE Conference on Decision and Control}, 
  title={Infinite time horizon maximum causal entropy inverse reinforcement learning}, 
  year={2014},
  volume={},
  number={},
  pages={4911-4916},
  doi={10.1109/CDC.2014.7040156}}

@article{huo2026rank,
  title={Rank-1 Approximation of Inverse Fisher for Natural Policy Gradients in Deep Reinforcement Learning},
  author={Huo, Yingxiao and Dash, Satya Prakash and Stoican, Radu and Kaski, Samuel and Sun, Mingfei},
  journal={arXiv preprint arXiv:2601.18626},
  year={2026}
}

@article{ghashami2015frequent,
  title={Frequent directions: Simple and deterministic matrix sketching},
  author={Ghashami, Mina and Liberty, Edo and Phillips, Jeff M and Woodruff, David P},
  journal={arXiv preprint arXiv:1501.01711},
  year={2015}
}

@article{ghadimi2018approximation,
  title={Approximation methods for bilevel programming},
  author={Ghadimi, Saeed and Wang, Mengdi},
  journal={arXiv preprint arXiv:1802.02246},
  year={2018}
}

@article{gauthier2026explaining,
  title={Explaining and Preventing Alignment Collapse in Iterative RLHF},
  author={Gauthier, Etienne and Bach, Francis and Jordan, Michael I},
  journal={arXiv preprint arXiv:2605.04266},
  year={2026}
}

@inproceedings{chakraborty2024parl,
  title={Parl: A unified framework for policy alignment in reinforcement learning from human feedback},
  author={Chakraborty, Souradip and Bedi, Amrit and Koppel, Alec and Wang, Huazheng and Manocha, Dinesh and Wang, Mengdi and Huang, Furong},
  booktitle={International Conference on Learning Representations},
  volume={2024},
  pages={24410--24449},
  year={2024}
}

@article{zeng2022maximum,
  title={Maximum-likelihood inverse reinforcement learning with finite-time guarantees},
  author={Zeng, Siliang and Li, Chenliang and Garcia, Alfredo and Hong, Mingyi},
  journal={Advances in Neural Information Processing Systems},
  volume={35},
  pages={10122--10135},
  year={2022}
}

@article{npg_in_rl,
  title         = {Natural Policy Gradients in Reinforcement Learning Explained},
  author        = {van Heeswijk, W. J. A.},
  journal       = {arXiv preprint arXiv:2209.01820},
  year          = {2022},
  eprint        = {2209.01820},
  archivePrefix = {arXiv}
}

@inproceedings{b4,
  title     = {Efficient Linear Bandits through Matrix Sketching},
  author    = {Kuzborskij, Ilja and Cella, Leonardo and Cesa-Bianchi, Nicol{\`o}},
  booktitle = {Proceedings of the 22nd International Conference on Artificial Intelligence and Statistics},
  series    = {Proceedings of Machine Learning Research},
  volume    = {89},
  pages     = {177--185},
  address   = {Naha, Okinawa, Japan},
  year      = {2019}
}

@inproceedings{chen2020efficient,
  title     = {Efficient and Robust High-Dimensional Linear Contextual Bandits},
  author    = {Chen, Cheng and Luo, Luo and Zhang, Weinan and Yu, Yong and Lian, Yijiang},
  booktitle = {Proceedings of the Twenty-Ninth International Joint Conference on Artificial Intelligence},
  series    = {IJCAI-20},
  pages     = {4259--4265},
  address   = {Yokohama, Japan},
  year      = {2020}
}

@article{lin_ucb,
  title         = {Scalable LinUCB: Low-Rank Design Matrix Updates for Recommenders with Large Action Spaces},
  author        = {Shustova, Ekaterina and Sheshukova, Marina and Samsonov, Sergey and Frolov, Evgeny},
  journal       = {arXiv preprint arXiv:2510.19349},
  year          = {2025},
  eprint        = {2510.19349},
  archivePrefix = {arXiv}
}

@article{levine2018reinforcement,
  author  = {Levine, Sergey},
  title   = {Reinforcement Learning and Control as Probabilistic Inference: Tutorial and Review},
  journal = {arXiv preprint arXiv:1805.00909},
  year    = {2018}
}

@article{barto1983neuronlike,
  title={Neuronlike Adaptive Elements That Can Solve Difficult Learning Control Problems},
  author={Barto, Andrew G. and Sutton, Richard S. and Anderson, Charles W.},
  journal={IEEE Transactions on Systems, Man, and Cybernetics},
  volume={13},
  number={5},
  pages={834--846},
  year={1983}
}

@book{anderson2007optimal,
  title={Optimal Control: Linear Quadratic Methods},
  author={Anderson, Brian D. O. and Moore, John B.},
  year={2007},
  publisher={Dover Publications}
}
\bibliographystyle{icomp2026_conference}

\appendix

\section{Hypergradient derivations}
\label{app:hypergradient-derivations}

\begin{proposition}[Implicit hypergradient for bilevel IRL]
\label{app:hypergradient-derivation}
Assume that \(\theta^\star(\phi)\) is a differentiable local solution of the inner problem,
\[
    \theta^\star(\phi)
    \in
    \arg\min_{\theta}
    \Linner(\theta,\phi),
\]
and that the Hessian
\[
    \left.
    \frac{\partial^2 \Linner}{\partial \theta^2}
    \right|_{\theta^\star(\phi),\phi}
\]
is invertible. Then the gradient of the induced outer objective
\[
    \Loutertilde(\phi)
    \coloneqq
    \Louter(\theta^\star(\phi))
\]
is given by
\[
\left.
\nabla_{\phi} \Loutertilde 
\right|_{\phi}
=
-
\left.
\frac{\partial^2 \Linner}{\partial \phi \partial \theta}
\right|_{\theta^\star(\phi),\phi}
\left[
\left.
\frac{\partial^2 \Linner}{\partial \theta^2}
\right|_{\theta^\star(\phi),\phi}
\right]^{-1}
\left.
\nabla_{\theta} \Louter
\right|_{\theta^\star(\phi)} \eqsp.
\]
\end{proposition}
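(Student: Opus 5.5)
The plan is the standard implicit-function-theorem argument. The outer objective $\Louter$ depends on $\phi$ only through $\theta^\star(\phi)$, so the chain rule gives
\[
\nabla_\phi \Loutertilde(\phi) = \left[\frac{\partial \theta^\star}{\partial \phi}(\phi)\right]^\top \nabla_\theta \Louter(\theta^\star(\phi)).
\]
Here $\partial\theta^\star/\partial\phi \in \R^{d_\theta\times d_\phi}$ is the Jacobian of the inner solution map. The whole task therefore reduces to expressing this Jacobian through second derivatives of $\Linner$.

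Since $\theta^\star(\phi)$ is a local minimizer of the differentiable function $\Linner(\cdot,\phi)$, the first-order stationarity condition
\[
\nabla_\theta \Linner(\theta^\star(\phi),\phi)=\vzero
\]
holds identically in $\phi$, at least on a neighbourhood of the point considered. We assume, implicitly as in the statement, that $\Linner$ is twice continuously differentiable. Differentiating this identity in $\phi$ with the chain rule gives
\[
\left.\frac{\partial^2 \Linner}{\partial\theta^2}\right|_{\theta^\star(\phi),\phi}\frac{\partial\theta^\star}{\partial\phi}
+\left.\frac{\partial^2\Linner}{\partial\theta\,\partial\phi}\right|_{\theta^\star(\phi),\phi}=0 .
\]
Here $\partial^2\Linner/\partial\theta\partial\phi$ is the $d_\theta\times d_\phi$ block, whose transpose is the $\partial^2\Linner/\partial\phi\partial\theta$ appearing in the statement. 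The Hessian is assumed invertible, so we can solve
\[
\frac{\partial\theta^\star}{\partial\phi}=-\left[\frac{\partial^2\Linner}{\partial\theta^2}\right]^{-1}\frac{\partial^2\Linner}{\partial\theta\,\partial\phi}.
\]
Alternatively, invertibility lets us invoke the implicit function theorem, which also yields differentiability of $\theta^\star$; the statement already assumes this.

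Substituting this Jacobian into the chain-rule expression and transposing completes the argument. The Hessian is symmetric by Schwarz's theorem, so its inverse transposes to itself, and the transpose of the $\theta\phi$ mixed block is the $\phi\theta$ block. This produces exactly the claimed formula.

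The only delicate point is justifying that the stationarity identity holds on a neighbourhood of $\phi$ rather than just at a single point, so that it may be differentiated. I would handle this by taking $\theta^\star$ to be the differentiable selection of local minimizers given in the hypothesis; stationarity then holds for every $\phi$ in its domain. The notational bookkeeping of which mixed block is transposed is the only other place where care is needed.
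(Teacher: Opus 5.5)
Your proposal is correct and follows essentially the same route as the paper: the chain rule, differentiation of the stationarity identity $\nabla_\theta\Linner(\theta^\star(\phi),\phi)=0$ in $\phi$, solving for the Jacobian of $\theta^\star$ with the invertible Hessian, and a final transposition that uses Hessian symmetry and the transpose relation between the mixed blocks. You also observe that stationarity must hold on a neighbourhood of $\phi$ so that it can be differentiated. The paper leaves this implicit, and it does not change the argument.
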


\begin{proof}

By definition, the outer objective optimized with respect to the reward parameters is
\[
    \Loutertilde(\phi)
    \coloneqq
    \Louter(\theta^\star(\phi)), \qquad
    \theta^\star(\phi)
    \in
    \arg\min_{\theta}
    \Linner(\theta,\phi).
\]
By the chain rule:
\[
    \left.
    \frac{d\Loutertilde}{d\phi}
    \right|_{\phi}
    =
    \left.
    \frac{\partial \Louter}{\partial \theta}
    \right|_{\theta^\star(\phi)}
    \left.
    \frac{d\theta^\star}{d\phi}
    \right|_{\phi}.
\]
Define the inner first-order optimality condition as:
\[
    g(\theta,\phi)
    \coloneqq
    \frac{\partial \Linner}{\partial \theta}(\theta,\phi).
\]
Since \(\theta^\star(\phi)\) is a local solution of the inner problem, it satisfies:
\[
    g(\theta^\star(\phi),\phi)=0.
\]
Differentiating this identity with respect to \(\phi\), we obtain:
\[
    \frac{dg}{d\phi} (\theta^\star(\phi),\phi) = 0.
\]
Applying the chain rule gives
\[
    \left.
    \frac{\partial g}{\partial \theta}
    \right|_{\theta^\star(\phi),\phi}
    \left.
    \frac{d\theta^\star}{d\phi}
    \right|_{\phi}
    +
    \left.
    \frac{\partial g}{\partial \phi}
    \right|_{\theta^\star(\phi),\phi}
    =
    0.
\]
Therefore,
\[
    \left.
    \frac{d\theta^\star}{d\phi}
    \right|_{\phi}
    =
    -
    \left[
    \left.
    \frac{\partial g}{\partial \theta}
    \right|_{\theta^\star(\phi),\phi}
    \right]^{-1}
    \left.
    \frac{\partial g}{\partial \phi}
    \right|_{\theta^\star(\phi),\phi}.
\]
Since
\[
    \frac{\partial g}{\partial \theta}
    =
    \frac{\partial^2 \Linner}{\partial \theta^2},
    \qquad
    \frac{\partial g}{\partial \phi}
    =
    \frac{\partial^2 \Linner}{\partial \theta \partial \phi},
\]
we get
\[
    \left.
    \frac{d\theta^\star}{d\phi}
    \right|_{\phi}
    =
    -
    \left[
    \left.
    \frac{\partial^2 \Linner}{\partial \theta^2}
    \right|_{\theta^\star(\phi),\phi}
    \right]^{-1}
    \left.
    \frac{\partial^2 \Linner}{\partial \theta \partial \phi}
    \right|_{\theta^\star(\phi),\phi}.
\]
Substituting this expression into the derivative of the outer objective gives
\[
    \left.
    \frac{d\Loutertilde}{d\phi}
    \right|_{\phi}
    =
    -
    \left.
    \frac{\partial \Louter}{\partial \theta}
    \right|_{\theta^\star(\phi)}
    \left[
    \left.
    \frac{\partial^2 \Linner}{\partial \theta^2}
    \right|_{\theta^\star(\phi),\phi}
    \right]^{-1}
    \left.
    \frac{\partial^2 \Linner}{\partial \theta \partial \phi}
    \right|_{\theta^\star(\phi),\phi}.
\]
Finally, switching from derivatives to gradients gives
\[
    \left.
    \nabla_\phi \Loutertilde
    \right|_{\phi}
    =
    \left(
    \left.
    \frac{d\Loutertilde}{d\phi}
    \right|_{\phi}
    \right)^\top .
\]
Moreover, transposing the mixed derivative changes the order of differentiation
\[
    \left(
    \left.
    \frac{\partial^2 \Linner}{\partial \theta \partial \phi}
    \right|_{\theta^\star(\phi),\phi}
    \right)^\top
    =
    \left.
    \frac{\partial^2 \Linner}{\partial \phi \partial \theta}
    \right|_{\theta^\star(\phi),\phi}.
\]
Using also the symmetry of the Hessian,
\[
    \left(
    \left.
    \frac{\partial^2 \Linner}{\partial \theta^2}
    \right|_{\theta^\star(\phi),\phi}
    \right)^\top
    =
    \left.
    \frac{\partial^2 \Linner}{\partial \theta^2}
    \right|_{\theta^\star(\phi),\phi},
\]
we obtain
\[
\left.
\nabla_{\phi} \Loutertilde
\right|_{\phi}
=
-
\left.
\frac{\partial^2 \Linner}{\partial \phi \partial \theta}
\right|_{\theta^\star(\phi),\phi}
\left[
\left.
\frac{\partial^2 \Linner}{\partial \theta^2}
\right|_{\theta^\star(\phi),\phi}
\right]^{-1}
\left.
\nabla_{\theta} \Louter
\right|_{\theta^\star(\phi)}.
\]
\end{proof}

\section{Discounted infinite-horizon derivations}
\label{app:discounted-formulation}

Throughout this appendix, $p_{\pi_\theta}$ denotes the infinite-horizon
trajectory distribution induced by the original transition kernel $\mathcal P$,
whereas $\widetilde p_{\pi_\theta}$ denotes the distribution induced by the
absorbing-state kernel $\widetilde{\mathcal P}$ defined in the main text. We
assume throughout that the relevant random variables are integrable and that
differentiation, expectation, and the discounted infinite sums can be
interchanged. In particular, the series below are assumed to be absolutely
convergent.

\begin{proof}[Proof of \Cref{lem:absorbing-discounting-main}]
Under $\widetilde{\mathcal P}$, the process follows the original dynamics at
each step with probability $\gamma$ and otherwise moves to the absorbing state.
Consequently, the probability of remaining in the original dynamics up to time
$t$ is $\gamma^{t-1}$. Conditional on this event, $(s_t,a_t)$ has the same
distribution as under $p_{\pi_\theta}$. Since $f(s_f,a_f)=0$, it follows that
\[
    \E_{\tau\sim\widetilde p_{\pi_\theta}}
    \left[f(s_t,a_t)\right]
    =
    \gamma^{t-1}
    \E_{\tau\sim p_{\pi_\theta}}
    \left[f(s_t,a_t)\right].
\]
Summing over $t\geq 1$ and using the assumed integrability gives
\[
    \E_{\tau\sim\widetilde p_{\pi_\theta}}
    \left[\sum_{t=1}^{\infty} f(s_t,a_t)\right]
    =
    \E_{\tau\sim p_{\pi_\theta}}
    \left[\sum_{t=1}^{\infty}\gamma^{t-1}f(s_t,a_t)\right],
\]
which proves the claim.
\end{proof}

\begin{proof}[Proof of \Cref{prop:irl_problem_formulation}]
Under the absorbing-state construction,
\[
    \log \widetilde p_{\pi_\theta}(\tau)
    =
    \log\widetilde\rho(\tau)
    +
    \sum_{t=1}^{\infty}\log\pi_\theta(a_t\mid s_t),
\]
where the terms after absorption vanish because
$\pi_\theta(a_f\mid s_f)=1$. Moreover,
\[
    \log\widetilde p_\phi(\tau)
    =
    \log\widetilde\rho(\tau)
    +\frac{1}{\alpha}\sum_{t=1}^{\infty}r_\phi(s_t,a_t)
    -\log Z_\phi.
\]
Therefore,
\[
\begin{aligned}
    \alpha\KL\!\left(
        \widetilde p_{\pi_\theta}\,\|\,\widetilde p_\phi
    \right)
    & =
    \E_{\tau\sim\widetilde p_{\pi_\theta}}
    \left[
        \sum_{t=1}^{\infty}
        \left(
            \alpha\log\pi_\theta(a_t\mid s_t)
            -r_\phi(s_t,a_t)
        \right)
    \right]
    +\alpha\log Z_\phi.
\end{aligned}
\]
The normalization term is independent of $\theta$. Applying
\Cref{lem:absorbing-discounting-main} to the expectation shows that minimizing
the inner KL divergence is equivalent to minimizing
\[
    \E_{\tau\sim p_{\pi_\theta}}
    \left[
        \sum_{t=1}^{\infty}\gamma^{t-1}
        \left(
            \alpha\log\pi_\theta(a_t\mid s_t)
            -r_\phi(s_t,a_t)
        \right)
    \right],
\]
which is $\Linner(\theta,\phi)$ in \Eqref{eq:loss_irl_def}.

For the outer problem, terms involving only the expert distribution and the
environment dynamics are independent of $\theta$, so
\[
    \KL\!\left(
        \widetilde p_{\mathrm{expert}}
        \,\|\,
        \widetilde p_{\pi_\theta}
    \right)
    =
    C_{\mathrm{expert}}
    -
    \E_{\tau\sim\widetilde p_{\mathrm{expert}}}
    \left[
        \sum_{t=1}^{\infty}\log\pi_\theta(a_t\mid s_t)
    \right],
\]
where $C_{\mathrm{expert}}$ does not depend on $\theta$. A second application of
\Cref{lem:absorbing-discounting-main} gives $\Louter(\theta)$ in
\Eqref{eq:loss_irl_def}. Hence the bilevel KL problem in
\Eqref{eq:bilevel_irl_def} and the discounted bilevel problem in
\Eqref{eq:loss_no_kl_def} have the same minimizers.
\end{proof}

\begin{proposition}[Discounted outer-objective gradient]
\label{prop:discounted-outer-gradient}
The discounted outer objective satisfies
\[
    \nabla_\theta\Louter(\theta)
    =
    -
    \E_{\tau\sim p_{\mathrm{expert}}}
    \left[
        \sum_{t=1}^{\infty}
        \gamma^{t-1}
        \nabla_\theta\log\pi_\theta(a_t\mid s_t)
    \right].
\]
\end{proposition}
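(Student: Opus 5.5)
The plan is to differentiate the definition of $\Louter$ in \Eqref{eq:loss_irl_def} directly. The crucial structural fact is that the sampling distribution $p_{\mathrm{expert}}$ does not depend on $\theta$. So, unlike the inner objective, no score-function (likelihood-ratio) term arises from differentiating the measure. The only $\theta$-dependence sits inside the integrand, in the terms $\log\pi_\theta(a_t\mid s_t)$.

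\textbf{Step 1: interchange gradient and expectation.} Write $\Louter(\theta)=-\E_{\tau\sim p_{\mathrm{expert}}}[G_\theta(\tau)]$ with
\[
G_\theta(\tau)=\sum_{t=1}^{\infty}\gamma^{t-1}\log\pi_\theta(a_t\mid s_t).
\]
Under the standing assumptions of this appendix, differentiation, expectation and the discounted infinite sum may be interchanged. This gives $\nabla_\theta\Louter(\theta)=-\E_{\tau\sim p_{\mathrm{expert}}}[\nabla_\theta G_\theta(\tau)]$.

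To make this rigorous rather than simply invoking the standing assumption, I would proceed in two stages. First, I would use dominated convergence, assuming a dominating integrable bound of the form $\sup_{\theta'\in U}\|\nabla_\theta\log\pi_{\theta'}(a\mid s)\|\le C(s,a)$ on a neighbourhood $U$ of $\theta$. Second, I would require $\E_{\tau\sim p_{\mathrm{expert}}}\bigl[\sum_t\gamma^{t-1}C(s_t,a_t)\bigr]<\infty$. With both in place, the mean value theorem applied to the difference quotients justifies the swap. For bounded scores this summability is immediate, since $\sum_t\gamma^{t-1}=1/(1-\gamma)$.

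\textbf{Step 2: termwise differentiation of the series.} Differentiate $G_\theta$ term by term, which the same absolute-convergence assumption permits:
\[
\nabla_\theta G_\theta(\tau)=\sum_{t=1}^{\infty}\gamma^{t-1}\nabla_\theta\log\pi_\theta(a_t\mid s_t).
\]
Substituting this into Step 1 gives the claimed identity.

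As a consistency check, the same formula also follows from the KL form in the proof of \Cref{prop:irl_problem_formulation}. There, differentiating $-\E_{\widetilde p_{\mathrm{expert}}}[\sum_t\log\pi_\theta(a_t\mid s_t)]$ and applying \Cref{lem:absorbing-discounting-main} with $f=\nabla_\theta\log\pi_\theta$ (componentwise) is valid because $f(s_f,a_f)=0$, given that $\pi_\theta(a_f\mid s_f)\equiv1$.

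The only genuine obstacle is justifying the interchange of $\nabla_\theta$ with the infinite sum and the expectation. Everything else is bookkeeping, and I would handle the interchange as described in Step 1.
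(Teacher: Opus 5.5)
Your proposal is correct and follows the same route as the paper. Because $p_{\mathrm{expert}}$ does not depend on $\theta$, no score-function term appears, and the gradient passes through the expectation and the discounted series under the appendix's standing interchange assumptions. Your dominated-convergence argument, with a locally uniform score bound $C(s,a)$ satisfying $\E_{\tau\sim p_{\mathrm{expert}}}\bigl[\sum_t\gamma^{t-1}C(s_t,a_t)\bigr]<\infty$, spells out a sufficient condition for an interchange that the paper simply assumes.
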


\begin{proof}
The expert trajectory distribution does not depend on $\theta$. Therefore,
using the regularity assumptions stated at the beginning of this appendix,
\[
\begin{aligned}
    \nabla_\theta\Louter(\theta)
    & =
    -\nabla_\theta
    \E_{\tau\sim p_{\mathrm{expert}}}
    \left[
        \sum_{t=1}^{\infty}
        \gamma^{t-1}\log\pi_\theta(a_t\mid s_t)
    \right] \\
    & =
    -\E_{\tau\sim p_{\mathrm{expert}}}
    \left[
        \sum_{t=1}^{\infty}
        \gamma^{t-1}
        \nabla_\theta\log\pi_\theta(a_t\mid s_t)
    \right].
\end{aligned}
\]
\end{proof}

\begin{proposition}[Discounted stepwise form of the trajectory Fisher]
\label{prop:discounted-stepwise-trajectory-fisher}
Under the assumptions of \Cref{prop:inner-hessian-fisher-main},
\[
    \nabla_\theta^2\Linner(\theta^\star(\phi),\phi)
    =
    \alpha
    \left.
    \E_{\tau\sim p_{\pi_\theta}}
    \left[
        \sum_{t=1}^{\infty}
        \gamma^{t-1}
        g_t(\tau)g_t(\tau)^\top
    \right]
    \right|_{\theta=\theta^\star(\phi)},
\]
where
\[
    g_t(\tau)
    \coloneqq
    \nabla_\theta\log\pi_\theta(a_t\mid s_t).
\]

In particular,
$\nabla_\theta^2\Linner(\theta^\star(\phi),\phi)
=\alpha\Fisher_{\theta^\star(\phi)}$.
\end{proposition}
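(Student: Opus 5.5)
The plan is to work at the level of trajectory distributions in the absorbing-state MDP $\wt{M}$, where $\Linner$ is an affine transform of a KL divergence. First I show that the Hessian of a reverse KL at a zero of the divergence is the Fisher matrix of the full-trajectory score. Then I reduce this trajectory Fisher to the stepwise discounted form using a martingale argument and \Cref{lem:absorbing-discounting-main}. Throughout I use the regularity assumptions stated at the start of this appendix, which allow differentiation under the expectation and interchange of sums.

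\textbf{Step 1: reduction to the KL Hessian.} From the proof of \Cref{prop:irl_problem_formulation} and \Cref{lem:absorbing-discounting-main},
\[
\alpha\KL(\widetilde p_{\pi_\theta}\,\|\,\widetilde p_\phi)=\Linner(\theta,\phi)+\alpha\log Z_\phi .
\]
Since $\log Z_\phi$ does not depend on $\theta$, this gives $\nabla_\theta^2\Linner=\alpha\nabla_\theta^2\KL(\widetilde p_{\pi_\theta}\|\widetilde p_\phi)$.

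\textbf{Step 2: Hessian of the KL.} Write $q_\theta=\widetilde p_{\pi_\theta}$, $\ell_\theta=\log q_\theta-\log\widetilde p_\phi$, and let
\[
s_\theta(\tau)=\nabla_\theta\log q_\theta(\tau)=\sum_t g_t(\tau)
\]
be the trajectory score. The factor $\tilde\rho$ carries no $\theta$, and absorbing steps contribute nothing because $\pi_\theta(a_f\mid s_f)=1$. Using $\E_{q_\theta}[s_\theta]=0$,
\[
\nabla_\theta\KL=\E_{q_\theta}[s_\theta\,\ell_\theta].
\]
Differentiating once more gives
\[
\nabla_\theta^2\KL=\E_{q_\theta}\big[s_\theta s_\theta^\top\ell_\theta+(\nabla_\theta s_\theta)\ell_\theta+s_\theta s_\theta^\top\big].
\]
At $\theta^\star(\phi)$ the hypothesis $\KL=0$ forces $q_{\theta^\star}=\widetilde p_\phi$, so $\ell_{\theta^\star}=0$ $q_{\theta^\star}$-almost surely. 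The first two terms therefore vanish, and
\[
\nabla_\theta^2\Linner(\theta^\star,\phi)=\alpha\,\E_{\widetilde p_{\pi_{\theta^\star}}}\big[s s^\top\big].
\]

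\textbf{Step 3: stepwise form.} Expand $s s^\top=\sum_{t,k}g_tg_k^\top$. For $t<k$, condition on the history up to and including $s_k$. Then $g_t$ is measurable with respect to that history, while
\[
\E[g_k\mid s_k]=\sum_a\nabla_\theta\pi_\theta(a\mid s_k)=0 .
\]
At the absorbing state this holds trivially, since there $g_k=0$. Hence all cross terms vanish and
\[
\E_{\widetilde p}[s s^\top]=\E_{\widetilde p}\Big[\sum_t g_tg_t^\top\Big].
\]
Finally, apply \Cref{lem:absorbing-discounting-main} entrywise with $f(s,a)=\nabla_\theta\log\pi_\theta(a\mid s)\nabla_\theta\log\pi_\theta(a\mid s)^\top$, which satisfies $f(s_f,a_f)=0$. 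This converts the sum to $\E_{p_{\pi_\theta}}[\sum_t\gamma^{t-1}g_tg_t^\top]=\Fisher_\theta$, completing both claims.

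\textbf{Main obstacle.} The delicate part is Step 3. The cross terms involve an infinite double sum, so the conditioning argument must be justified in a way compatible with the random absorption time. I will handle this by invoking the assumed absolute convergence to swap the sum and expectation, so that each pair $(t,k)$ can be treated separately. A secondary point is in Step 2: going from $\KL=0$ to $\ell=0$ almost surely, and making sure the $(\nabla s)\ell$ term is integrable so that it indeed vanishes.
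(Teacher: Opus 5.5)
Your proposal is correct and follows essentially the same route as the paper: the score-function Hessian of the inner objective under $\widetilde p_{\pi_\theta}$, vanishing of the $\ell$-weighted terms at the realizable optimum, cancellation of cross terms $g_t g_k^\top$ via the conditional score identity, and \Cref{lem:absorbing-discounting-main} applied entrywise. The only difference is cosmetic. You center $\ell$ as the log-ratio $\log\widetilde p_{\pi_\theta}-\log\widetilde p_\phi$, so it is identically zero at $\theta^\star(\phi)$. The paper instead keeps $\ell_{\theta,\phi}+\alpha$ as a nonzero constant there and removes that term using the second-order identity $\E[S_\theta S_\theta^\top+H_\theta]=0$.
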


\begin{proof}
By \Cref{lem:absorbing-discounting-main}, the discounted inner objective can equivalently be written under the absorbing-state trajectory distribution as
\[
    \Linner(\theta,\phi)
    =
    \E_{\tau\sim\widetilde p_{\pi_\theta}}
    \left[
        \ell_{\theta,\phi}(\tau)
    \right],
\]
where
\[
    \ell_{\theta,\phi}(\tau)
    :=
    \sum_{t=1}^{\infty}
    \left(
        \alpha\log\pi_\theta(a_t\mid s_t)
        -
        r_\phi(s_t,a_t)
    \right).
\]
Define the trajectory score and its derivative by
\[
    S_\theta(\tau)
    :=
    \nabla_\theta
    \log\widetilde p_{\pi_\theta}(\tau),
    \qquad
    H_\theta(\tau)
    :=
    \nabla_\theta^2
    \log\widetilde p_{\pi_\theta}(\tau).
\]
Since the transition kernel is independent of \(\theta\),
\[
    S_\theta(\tau)
    =
    \sum_{t=1}^{\infty}
    \nabla_\theta\log\pi_\theta(a_t\mid s_t)
    =
    \sum_{t=1}^{\infty}g_t(\tau).
\]
After absorption, all terms vanish because
\(\pi_\theta(a_f\mid s_f)=1\), so these sums contain only finitely many
nonzero terms almost surely.

Using the score-function identity and
\(\nabla_\theta\ell_{\theta,\phi}(\tau)=\alpha S_\theta(\tau)\), we obtain
\[
    \nabla_\theta\Linner(\theta,\phi)
    =
    \E_{\tau\sim\widetilde p_{\pi_\theta}}
    \left[
        \bigl(\ell_{\theta,\phi}(\tau)+\alpha\bigr)
        S_\theta(\tau)
    \right].
\]
Differentiating once more gives
\[
\begin{aligned}
    \nabla_\theta^2\Linner(\theta,\phi)
    =
    \E_{\tau\sim\widetilde p_{\pi_\theta}}
    \Big[
        \bigl(\ell_{\theta,\phi}(\tau)+2\alpha\bigr)
        S_\theta(\tau)S_\theta(\tau)^\top +
        \bigl(\ell_{\theta,\phi}(\tau)+\alpha\bigr)
        H_\theta(\tau)
    \Big].
\end{aligned}
\]
Equivalently,
\[
\begin{aligned}
    \nabla_\theta^2\Linner(\theta,\phi)
    =
    \E_{\tau\sim\widetilde p_{\pi_\theta}}
    \Big[
        \alpha S_\theta(\tau)S_\theta(\tau)^\top +
        \bigl(\ell_{\theta,\phi}(\tau)+\alpha\bigr)
        \bigl(
            S_\theta(\tau)S_\theta(\tau)^\top
            +H_\theta(\tau)
        \bigr)
    \Big].
\end{aligned}
\]

The trajectory score satisfies
\[
    \E_{\tau\sim\widetilde p_{\pi_\theta}}
    \left[
        S_\theta(\tau)S_\theta(\tau)^\top
        +H_\theta(\tau)
    \right]
    =0,
\]
which follows by differentiating
\(\int\widetilde p_{\pi_\theta}(\tau)\,d\tau=1\) twice with respect to
\(\theta\).

We now evaluate the Hessian at \(\theta=\theta^\star(\phi)\). The two
trajectory distributions satisfy
\[
    \log\widetilde p_{\pi_\theta}(\tau)
    -
    \log\widetilde p_\phi(\tau)
    =
    \frac{1}{\alpha}\ell_{\theta,\phi}(\tau)
    +
    \log Z_\phi.
\]
Under exact realization,
\[
    \widetilde p_{\pi_{\theta^\star(\phi)}}
    =
    \widetilde p_\phi \eqsp,
\]
and hence
\[
    \ell_{\theta^\star(\phi),\phi}(\tau)
    =
    -\alpha\log Z_\phi
    \qquad\text{a.s.}
\]
Thus,
\(\ell_{\theta^\star(\phi),\phi}(\tau)+\alpha\) is constant with
respect to \(\tau\). Using the trajectory score identity above,
\[
\begin{aligned}
    &\E_{\tau\sim\widetilde p_{\pi_{\theta^\star(\phi)}}}
    \left[
        \bigl(\ell_{\theta^\star(\phi),\phi}(\tau)+\alpha\bigr)
        \bigl(
            S_{\theta^\star(\phi)}S_{\theta^\star(\phi)}^\top
            +H_{\theta^\star(\phi)}
        \bigr)
    \right]
    =0.
\end{aligned}
\]
Consequently,
\[
    \nabla_\theta^2\Linner(\theta^\star(\phi),\phi)
    =
    \alpha
    \left.
    \E_{\tau\sim\widetilde p_{\pi_\theta}}
    \left[
        S_\theta(\tau)S_\theta(\tau)^\top
    \right]
    \right|_{\theta=\theta^\star(\phi)}.
\]

It remains to obtain the stepwise form. Expanding the trajectory score gives
\[
    S_\theta(\tau)S_\theta(\tau)^\top
    =
    \sum_{t=1}^{\infty}g_tg_t^\top
    +
    \sum_{t\neq k}g_tg_k^\top.
\]
For \(t<k\), the conditional score identity yields
\[
    \E\!\left[
        g_k(\tau)
        \mid
        s_1,a_1,\ldots,s_{k-1},a_{k-1},s_k
    \right]
    =0.
\]
Since \(g_t\) is measurable with respect to the conditioning variables,
\[
    \E[g_tg_k^\top]=0.
\]
The case \(k<t\) follows by transposition. Therefore,
\[
    \E_{\tau\sim\widetilde p_{\pi_\theta}}
    \left[
        S_\theta(\tau)S_\theta(\tau)^\top
    \right]
    =
    \E_{\tau\sim\widetilde p_{\pi_\theta}}
    \left[
        \sum_{t=1}^{\infty}g_t(\tau)g_t(\tau)^\top
    \right].
\]

Finally, applying \Cref{lem:absorbing-discounting-main} componentwise gives
\[
    \E_{\tau\sim\widetilde p_{\pi_\theta}}
    \left[
        \sum_{t=1}^{\infty}g_tg_t^\top
    \right]
    =
    \E_{\tau\sim p_{\pi_\theta}}
    \left[
        \sum_{t=1}^{\infty}
        \gamma^{t-1}g_tg_t^\top
    \right].
\]
Combining the preceding identities and evaluating at
\(\theta=\theta^\star(\phi)\) yields
\[
    \nabla_\theta^2\Linner(\theta^\star(\phi),\phi)
    =
    \alpha
    \left.
    \E_{\tau\sim p_{\pi_\theta}}
    \left[
        \sum_{t=1}^{\infty}
        \gamma^{t-1}
        g_t(\tau)g_t(\tau)^\top
    \right]
    \right|_{\theta=\theta^\star(\phi)}
    =
    \alpha\Fisher_{\theta^\star(\phi)}.
\]
\end{proof}

\begin{proposition}[Discounted prefix-score form of the inner cross derivative]
\label{prop:discounted-prefix-score-inner-cross-derivative}
The inner cross derivative satisfies
\[
    \frac{\partial^2\Linner}{\partial\phi\,\partial\theta}
    =
    -
    \E_{\tau\sim p_{\pi_\theta}}
    \left[
        \sum_{t=1}^{\infty}
        \gamma^{t-1}
        \left(
            \frac{\partial r_\phi(s_t,a_t)}{\partial\phi}
        \right)^\top
        \left(
            \sum_{k=1}^{t}
            \frac{\partial}{\partial\theta}
            \log\pi_\theta(a_k\mid s_k)
        \right)
    \right].
\]
\end{proposition}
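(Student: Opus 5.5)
We will work with the absorbing-state representation and the score-function form of the inner gradient already obtained in the proof of \Cref{prop:discounted-stepwise-trajectory-fisher}. That is, we write
\[
\nabla_\theta\Linner(\theta,\phi)=\E_{\tau\sim\widetilde p_{\pi_\theta}}\bigl[(\ell_{\theta,\phi}(\tau)+\alpha)S_\theta(\tau)\bigr],\qquad \ell_{\theta,\phi}(\tau)=\sum_{t\ge1}\bigl(\alpha\log\pi_\theta(a_t\mid s_t)-r_\phi(s_t,a_t)\bigr).
\]
The sampling distribution $\widetilde p_{\pi_\theta}$ and the score $S_\theta$ do not depend on $\phi$. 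Under the standing interchange assumptions of this appendix, we can therefore differentiate under the expectation with respect to $\phi$. Only $-\sum_t r_\phi(s_t,a_t)$ contributes, which gives
\[
\frac{\partial^2\Linner}{\partial\phi\,\partial\theta}=-\E_{\tau\sim\widetilde p_{\pi_\theta}}\Bigl[\sum_{t\ge1}\bigl(\partial_\phi r_\phi(s_t,a_t)\bigr)^\top\sum_{k\ge1}g_k(\tau)\Bigr].
\]
Here the orientation of the product matches the statement's convention (a $\phi\times\theta$ block, or its transpose). This identity holds for every $\theta$, not only at the optimum. The absorbing-state terms vanish because $r_\phi(s_f,a_f)=0$ and $g_k=0$ after absorption.

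The next step is a causality argument that truncates the inner sum to $k\le t$. Fix $t$ and take $k>t$. Condition on the history $\mathcal H_{k}=(s_1,a_1,\dots,s_{k-1},a_{k-1},s_k)$. The factor $\partial_\phi r_\phi(s_t,a_t)$ is $\mathcal H_k$-measurable. By the conditional score identity already used in \Cref{prop:discounted-stepwise-trajectory-fisher}, $\E[g_k\mid\mathcal H_k]=\sum_a\nabla_\theta\pi_\theta(a\mid s_k)=0$. Hence every cross term with $k>t$ has zero expectation. This leaves
\[
-\E_{\widetilde p_{\pi_\theta}}\Bigl[\sum_{t\ge1}(\partial_\phi r_\phi(s_t,a_t))^\top\sum_{k\le t}g_k\Bigr].
\]
Exchanging the sum over $t$ with the expectation and splitting into future and past parts is justified by the absolute-convergence assumption.

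Finally, we convert back to the original process. Define $f(s_t,\cdot)$ along the trajectory as $h_t(\tau)=(\partial_\phi r_\phi(s_t,a_t))^\top\sum_{k\le t}g_k$. This is not a function of $(s_t,a_t)$ alone, so \Cref{lem:absorbing-discounting-main} does not apply verbatim. Instead we reuse its proof. Under $\widetilde{\mathcal P}$, the prefix $(s_1,a_1,\dots,s_t,a_t)$ coincides with the original-process prefix on the survival event, which has probability $\gamma^{t-1}$ and is independent of that prefix. On the complementary event $h_t=0$, since $(s_t,a_t)=(s_f,a_f)$ and $r_\phi(s_f,a_f)=0$. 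Therefore $\E_{\widetilde p}[h_t]=\gamma^{t-1}\E_{p}[h_t]$, and summing over $t$ gives the claimed formula.

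The main obstacle is this last step: the lemma has to be extended from per-step functions to prefix-dependent ones. The cleanest route is to state and verify a prefix version of the lemma, using the fact that the absorption coin flips are independent of the original dynamics.
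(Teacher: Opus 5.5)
Your proof is correct, but it differentiates in the opposite order from the paper.

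\textbf{The paper's route.} It takes $\partial_\phi$ first, in the original process: $\partial_\phi\Linner=-\E_{p_{\pi_\theta}}[\sum_t\gamma^{t-1}\partial_\phi r_\phi(s_t,a_t)]$. Here $\gamma^{t-1}$ is a deterministic weight, and each summand is the expectation of a function of the length-$t$ prefix. Differentiating that expectation in $\theta$ through the score of the prefix marginal gives $\sum_{k\le t}\nabla_\theta\log\pi_\theta(a_k\mid s_k)$ immediately, because that density contains only the factors $\pi_\theta(a_k\mid s_k)$ with $k\le t$. Causality is therefore automatic, and neither the absorbing-state process nor any extension of \Cref{lem:absorbing-discounting-main} is needed.

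\textbf{Your route.} You start from the full-trajectory score in the absorbing representation. That is the right place for it: $S_\theta$ is an a.s.\ finite sum there, whereas in the original infinite-horizon process the full score series need not converge. The cost is two extra steps: the martingale-difference removal of the $k>t$ cross terms, and the prefix version of the absorbing lemma. You justify both correctly. The second rests on the absorption coin flips being independent of the original dynamics, and on $\partial_\phi r_\phi(s_f,a_f)=0$, which holds because $r_\phi(s_f,a_f)=0$ for every $\phi$.

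\textbf{What each buys.} Your order computes $\partial_\phi(\nabla_\theta\Linner)$. This is literally the block in \Cref{app:hypergradient-derivation}, where it is defined via $\partial g/\partial\phi$ with $g=\partial\Linner/\partial\theta$. The paper's $\phi$-then-$\theta$ computation tacitly uses symmetry of mixed partials, which is harmless under the standing smoothness assumptions. You also reuse exactly the conditional-score argument of \Cref{prop:discounted-stepwise-trajectory-fisher}. The paper's route is shorter; yours makes the causality and discount-transfer steps explicit.
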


\begin{proof}
Differentiating the discounted inner objective first with respect to $\phi$
gives
\[
    \frac{\partial\Linner}{\partial\phi}
    =
    -
    \E_{\tau\sim p_{\pi_\theta}}
    \left[
        \sum_{t=1}^{\infty}
        \gamma^{t-1}
        \frac{\partial r_\phi(s_t,a_t)}{\partial\phi}
    \right].
\]
For a quantity determined by the trajectory up to time $t$, differentiation of
its expectation with respect to $\theta$ introduces only the policy scores up to
that time. Hence, by the score-function identity and causality,
\[
    \frac{\partial}{\partial\theta}
    \E_{\tau\sim p_{\pi_\theta}}
    \left[
        \frac{\partial r_\phi(s_t,a_t)}{\partial\phi}
    \right]
    =
    \E_{\tau\sim p_{\pi_\theta}}
    \left[
        \left(
            \frac{\partial r_\phi(s_t,a_t)}{\partial\phi}
        \right)^\top
        \left(
            \sum_{k=1}^{t}
            \frac{\partial}{\partial\theta}
            \log\pi_\theta(a_k\mid s_k)
        \right)
    \right].
\]
Interchanging differentiation with the absolutely convergent discounted series
then gives the claimed expression.
\end{proof}

\begin{corollary}[JVP form of the inner cross-derivative-vector product]
\label{cor:jvp-inner-cross-derivative}
Let $\vv\in\R^{d_\theta}$ and define the scalar sequence
\[
    q_t(\tau)
    \coloneqq
    \left(
        \frac{\partial}{\partial\theta}
        \log\pi_\theta(a_t\mid s_t)
    \right)\vv,
    \qquad
    b_t(\tau)
    \coloneqq
    \sum_{k=1}^{t}q_k(\tau).
\]
Then
\[
    \frac{\partial^2\Linner}{\partial\phi\,\partial\theta}\vv
    =
    -
    \E_{\tau\sim p_{\pi_\theta}}
    \left[
        \nabla_\phi
        \left(
            \sum_{t=1}^{\infty}
            \gamma^{t-1}r_\phi(s_t,a_t)b_t(\tau)
        \right)
    \right].
\]
\end{corollary}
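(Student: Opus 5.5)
The corollary follows from \Cref{prop:discounted-prefix-score-inner-cross-derivative} by right-multiplying by $\vv$ and moving that fixed vector inside the expectation and the derivative in $\phi$. I will first apply the proposition and multiply both sides by $\vv$ on the right. Since $\vv$ is deterministic and does not depend on $\tau$ or $\phi$, linearity of expectation and of the series gives
\[
    \frac{\partial^2\Linner}{\partial\phi\,\partial\theta}\vv
    =
    -
    \E_{\tau\sim p_{\pi_\theta}}
    \left[
        \sum_{t=1}^{\infty}
        \gamma^{t-1}
        \left(
            \frac{\partial r_\phi(s_t,a_t)}{\partial\phi}
        \right)^\top
        \left(
            \sum_{k=1}^{t}
            \frac{\partial}{\partial\theta}
            \log\pi_\theta(a_k\mid s_k)
        \right)\vv
    \right].
\]
Here each $\partial_\theta\log\pi_\theta(a_k\mid s_k)$ is read as a row vector, so it is consistent with the convention defining $q_k$. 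The inner factor is then exactly $\sum_{k\le t} q_k(\tau)=b_t(\tau)$, a scalar.

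Next I observe that $b_t(\tau)$ depends only on $\theta$, $\vv$ and the trajectory, and not on $\phi$. Hence
\[
    \left(\frac{\partial r_\phi(s_t,a_t)}{\partial\phi}\right)^\top b_t(\tau)
    = \nabla_\phi\bigl(r_\phi(s_t,a_t)\,b_t(\tau)\bigr).
\]
The sampling distribution $p_{\pi_\theta}$ also does not depend on $\phi$. I then pull $\nabla_\phi$ outside the discounted sum, which yields the expression in the corollary.

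The only delicate step is interchanging $\nabla_\phi$ with the infinite discounted series. I will justify it with the standing assumption at the start of \Cref{app:discounted-formulation}, namely that the series converge absolutely and differentiation may be interchanged with them. Alternatively, in the absorbing-state picture the sum has only finitely many nonzero terms almost surely, so the interchange is termwise differentiation of a finite sum. Beyond this, the argument is bookkeeping of row/column conventions, which I will keep consistent with \Cref{prop:discounted-prefix-score-inner-cross-derivative}.
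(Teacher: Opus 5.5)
Your proposal is correct and follows the paper's proof essentially step for step. You multiply the identity of \Cref{prop:discounted-prefix-score-inner-cross-derivative} on the right by $\vv$, recognize the prefix factor as the scalar $b_t(\tau)$, and use its independence of $\phi$ to write $\bigl(\partial_\phi r_\phi(s_t,a_t)\bigr)^\top b_t(\tau)=\nabla_\phi\bigl(r_\phi(s_t,a_t)\,b_t(\tau)\bigr)$, with the interchanges covered by the appendix's standing regularity assumptions.

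Your alternative finite-sum justification would need one more step. The expectation is taken under $p_{\pi_\theta}$, where the discounted series is genuinely infinite, and $b_t$ depends on the history. So \Cref{lem:absorbing-discounting-main} does not move it to the absorbing-state picture verbatim, but your primary justification already suffices.
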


\begin{proof}
Multiplying the identity in
\Cref{prop:discounted-prefix-score-inner-cross-derivative} by $\vv$ gives
\[
    \frac{\partial^2\Linner}{\partial\phi\,\partial\theta}\vv
    =
    -
    \E_{\tau\sim p_{\pi_\theta}}
    \left[
        \sum_{t=1}^{\infty}
        \gamma^{t-1}
        \left(
            \frac{\partial r_\phi(s_t,a_t)}{\partial\phi}
        \right)^\top
        b_t(\tau)
    \right].
\]
The assumed absolute convergence justifies the exchange of the matrix-vector
product and the series. Since $b_t(\tau)$ is independent of $\phi$, it can be
treated as constant when differentiating with respect to $\phi$, which gives
the stated JVP representation.
\end{proof}

\newpage
\section{Algorithmic Details}
\label{app:algorithms}


\SCFD

\newpage
\section{Additional Computational Results}
\label{app:additional-computational-results}

\begin{table}[!htbp]
    \centering
    \caption{
        Wall-clock time per outer iteration across the considered
        environments. For runs executed under the fixed 24-hour
        computational budget, the iteration time is computed using
        the number of completed outer iterations.
    }
    \label{tab:outer-iteration-time}
    \small
    \setlength{\tabcolsep}{6pt}
    \renewcommand{\arraystretch}{1.1}

    \begin{tabular}{l c c}
        \toprule
        Method & Time / outer iteration (s) & Runs \\
        \midrule

        \multicolumn{3}{c}{\textbf{CartPole}} \\
        \midrule
        ML-IRL
            & $510.20 \pm 120.77$
            & 3 \\
        Explicit Fisher
            & $1464.41$
            & 1 \\
        Fisher with Sketching ($m=8$)
            & $1132.08 \pm 130.33$
            & 3 \\

        \midrule\midrule

        \multicolumn{3}{c}{\textbf{LQR}} \\
        \midrule
        ML-IRL
            & $53.61 \pm 6.57$
            & 3 \\
        Explicit Fisher
            & $1037.76 \pm 120.69$
            & 8 \\
        Fisher with Sketching ($m=32$)
            & $995.66 \pm 91.16$
            & 3 \\
        Fisher with Sketching ($m=64$)
            & $1171.03 \pm 188.30$
            & 4 \\
        Fisher with Sketching ($m=128$)
            & $1042.90 \pm 160.41$
            & 2 \\
        Fisher with Sketching ($m=256$)
            & $1316.88$
            & 1 \\

        \bottomrule
    \end{tabular}
\end{table}

\begin{table}[!htbp]
    \centering
    \caption{
        Peak memory usage across the considered environments.
        Peak memory is measured as the maximum resident set size (RSS)
        observed during training.
    }
    \label{tab:peak-memory}
    \small
    \setlength{\tabcolsep}{6pt}
    \renewcommand{\arraystretch}{1.1}

    \begin{tabular}{l c c}
        \toprule
        Method & Peak RSS (MB) & Runs \\
        \midrule

        \multicolumn{3}{c}{\textbf{CartPole}} \\
        \midrule
        ML-IRL
            & $645.57 \pm 1.56$
            & 3 \\
        Explicit Fisher
            & $644.57$
            & 1 \\
        Fisher with Sketching ($m=8$)
            & $662.14 \pm 30.00$
            & 4 \\

        \midrule\midrule

        \multicolumn{3}{c}{\textbf{LQR}} \\
        \midrule
        ML-IRL
            & $779.83 \pm 120.03$
            & 3 \\
        Explicit Fisher
            & $929.61 \pm 198.11$
            & 10 \\
        Fisher with Sketching ($m=32$)
            & $725.48 \pm 99.84$
            & 3 \\
        Fisher with Sketching ($m=64$)
            & $710.89 \pm 102.31$
            & 4 \\
        Fisher with Sketching ($m=128$)
            & $767.64 \pm 122.08$
            & 2 \\
        Fisher with Sketching ($m=256$)
            & $844.58$
            & 1 \\

        \bottomrule
    \end{tabular}
\end{table}

\end{document}